
\documentclass{article}

\usepackage{microtype}
\usepackage{graphicx}
\usepackage{subfigure}
\usepackage{booktabs} 

\usepackage{amsmath,amsfonts,bm,amssymb, mathtools}






\def\Secref#1{Section~\ref{#1}}


\def\eqref#1{equation~\ref{#1}}









\def\1{\bm{1}}


\def\rb{{\textnormal{b}}}







\def\vb{{\bm{b}}}


\def\cb{\boldsymbol{c}}

\def\Kb{\boldsymbol{K}}

\def\rb{\boldsymbol{r}}
\def\ub{\boldsymbol{u}}
\def\vb{\boldsymbol{v}}
\def\wb{\boldsymbol{w}}
\def\xb{\boldsymbol{x}}
\def\yb{\boldsymbol{y}}
\def\zb{\boldsymbol{z}}
\def\epsilonb{\boldsymbol{\epsilon}}
\def\betahat{\hat{\bm{\beta}}}
\def\psrc{p_{\mathrm{s}}}
\def\ptar{p_{\mathrm{t}}}
\def\betasrc{\boldsymbol{\beta}_{\mathrm{s}}}
\def\betasc{\boldsymbol{\beta}_{\mathrm{sc}}}
\def\betatar{\boldsymbol{\beta}_{\mathrm{t}}}
\def\betaft{\bm{\beta}_\mathrm{ft}}
\def\Phib{\bm{\Phi}}
\def\nub{\bm{\nu}}

\def\thetab{\bm{\theta}}
\def\Thetab{\boldsymbol{\Theta}}
\def\fs{f^*_\textrm{s}(\xb)}
\def\ft{f^*_\textrm{t}(\xb)}
\def\der{\mathrm{d}}

\def\Db{\boldsymbol{D}}

\def\Pb{\boldsymbol{P}}

\def\Xb{\boldsymbol{X}}

\def\Wb{\boldsymbol{W}}
\def\Cb{\boldsymbol{C}}
\def\Ib{\boldsymbol{I}}

\def\RR{\mathbb{R}}  
\def\EE{\mathbb{E}}

\newcommand{\defeq}{\vcentcolon=}

\newcommand{\Dbar}[1]{\bar{\Wb}_#1^T\bar{\Wb}_#1 - \bar{\Wb}_{#1 + 1}\bar{\Wb}_{#1+1}^T}
\newcommand{\norm}[2]{\lVert #1 \rVert_{#2}}
\newcommand{\normsq}[2]{\lVert #1 \rVert_{#2}^2}
\newcommand{\tr}[1]{\text{tr}(#1)}
\newcommand{\im}{\text{im}}
\newcommand{\row}{\text{row}}

\def\grad{\nabla}



\DeclareMathAlphabet{\mathsfit}{\encodingdefault}{\sfdefault}{m}{sl}
\SetMathAlphabet{\mathsfit}{bold}{\encodingdefault}{\sfdefault}{bx}{n}











\newcommand{\E}{\mathbb{E}}

\newcommand{\KL}{D_{\mathrm{KL}}}



\DeclareMathOperator*{\argmin}{arg\,min}

\usepackage{hyperref}



\usepackage[accepted]{icml2025}

\usepackage{amsmath}
\usepackage{amssymb}
\usepackage{mathtools}
\usepackage{amsthm}
\usepackage{soul}

\usepackage[capitalize,noabbrev]{cleveref}

\theoremstyle{plain}
\newtheorem{theorem}{Theorem}[section]

\newtheorem{lemma}[theorem]{Lemma}

\theoremstyle{definition}

\newtheorem{assumption}[theorem]{Assumption}
\theoremstyle{remark}

\usepackage[textsize=tiny]{todonotes}

\icmltitlerunning{Features are fate: a theory of transfer learning in high-dimensional regression}

\begin{document}

\twocolumn[
\icmltitle{Features are fate: a theory of transfer learning in high-dimensional regression}



\icmlsetsymbol{equal}{*}

\begin{icmlauthorlist}
\icmlauthor{Javan Tahir}{ap}
\icmlauthor{Surya Ganguli}{ap}
\icmlauthor{Grant Rotskoff}{chem}
\end{icmlauthorlist}

\icmlaffiliation{ap}{Department of Applied Physics, Stanford University, Stanford CA, USA}
\icmlaffiliation{chem}{Department of Chemistry, Stanford University, Stanford CA, USA}

\icmlcorrespondingauthor{Javan Tahir}{javan@stanford.edu}

\icmlkeywords{Machine Learning theory, transfer learning, feature learning, random matrix theory}

\vskip 0.3in
]



\printAffiliationsAndNotice{}  

\begin{abstract}
With the emergence of large-scale pre-trained neural networks, methods to adapt such ``foundation'' models to data-limited downstream tasks have become a necessity.
Fine-tuning, preference optimization, and transfer learning have all been successfully employed for these purposes when the target task closely resembles the source task, but a precise theoretical understanding of ``task similarity'' is still lacking. 
We adopt a \emph{feature-centric} viewpoint on transfer learning and establish a number of theoretical results that demonstrate that when the target task is well represented by the feature space of the pre-trained model, transfer learning outperforms training from scratch.
We study deep linear networks as a minimal model of transfer learning in which we can analytically characterize the transferability phase diagram as a function of the target dataset size and the feature space overlap.
For this model, we establish rigorously that when the feature space overlap between the source and target tasks is sufficiently strong, both linear transfer and fine-tuning improve performance, especially in the low data limit. 
These results build on an emerging understanding of feature learning dynamics in deep linear networks, and we demonstrate numerically that the rigorous results we derive for the linear case also apply to nonlinear networks.
\end{abstract}

\section{Introduction}
\label{introduction}
State-of-the-art neural network models have billions to trillions of parameters and are trained on datasets of a similar scale. The benefits of dataset scale are manifest in the astounding generalization capability of these foundation models~\citep{bahri2024explaining}. For many applications, however, datasets of the scale used for natural language processing or computer vision are difficult, if not impossible, to generate. 
To alleviate the problem of inadequate dataset scale, the representations of a foundation model seem to provide a useful inductive bias for adaptation to a target task. While they are now ubiquitous, \textit{transfer learning} methods lack a solid theoretical foundation or algorithmic design principles. 
As such, it remains difficult to predict when---and with which approach---transfer learning will outperform training on the target task alone. 
By studying an analytically solvable model, we find that the feature space learned during pretraining is the relevant object for predicting transfer performance.
Of course, adopting a feature-centric viewpoint creates model-specific challenges because unambiguously identifying learned features remains an outstanding and difficult characterization problem for deep neural networks.
For this reason, in this work we focus on deep linear networks trained with gradient flow, as feature learning dynamics are well-understood in this setting\footnote{All code to reproduce the results in this paper can be found at \url{https://github.com/javantahir/features_are_fate}}. We develop an intuitive understanding of linear transfer and full fine-tuning in this model. In contrast to other recent work, we quantify transfer performance relative to training on the target task alone and precisely identify when transfer learning leads to improved performance, effectively building a phase diagram for transfer efficiency. Finally, in numerical experiments, we show that this picture holds qualitatively for nonlinear networks as well. 

\paragraph{Related Work}
\label{related_work}

    \paragraph{\textbf{Theoretical aspects of transfer learning}}A number of recent works have studied theoretical aspects of transfer learning, focusing on the risk associated with various transfer algorithms. \citet{infotheory} use information theory to derive bounds on the risk of transfer learning using a mixture of source and target data. \citet{shilton2017regret} analyze transfer in the context of Gaussian process regression. \citet{tripuraneni2020theory} work in a fairly general setting, and derive bounds on the generalization error of transferred models through a complexity argument, highlighting the importance of feature diversity among tasks. \citet{transmf} study  transfer learning in highly overparameterized models, including one hidden layer neural networks, and derive bounds on the excess risk. \citet{gibbs} study the excess risk of transferred models optimized with the Gibbs algorithm and highlight a bias-variance interpretation of the generalization performance. \citet{liu2019towards, neyshabur2020being} study transfer learning from the perspective of the loss landscape and find that transferred models often find flatter minima than those trained from scratch. Consistent with our feature-centric viewpoint, \citet{kumar_finetuning_2022} show that fine-tuning can distort the pretrained features, leading to poor out of distribution behavior. 
    Integral probability metrics on the dataset and target have been used to predict transfer learning performance. For example, \citet{infotheory, nguyen2021similarity,alvarez2020geometric} suggest that these metrics correlate with generalization error on the target task. 
    However, we prove in Appendix~\ref{sec:dataset_similarity}, that integral probability metrics such as the Wasserstein-1 Distance, Dudley Metric and $\phi-$divergences such as the KL-divergence need not correlate with transfer efficiency, emphasizing that transferrability is a property of the learned feature space and not the source and target datasets alone.    
    
    \paragraph{\textbf{Transfer learning in solvable models}}Similar to our approach, several theory works have worked with analytically tractable models to more precisely characterize transfer performance. \citet{lampinen2018analytic, atanasov2021neuralnetworkskernellearners, shachaf2021theoretical} also study transfer learning in deep linear networks, but focus on the generalization error alone, not the transferability relative to a scratch trained baseline, which obfuscates the conditions for transfer learning to be beneficial. \citet{gerace2022probing} studies transfer learning with small nonlinear networks with data generated from a ``hidden manifold`` \citep{goldt2020modeling} and find it to be effective when tasks are very similar, and data is scarce, but do not theoretically describe regions of negative transfer. \citet{saglietti_solvable_2022} studies knowledge distillation in a solvable model, which can be viewed as a special case of transfer learning. \citet{ingrosso2024statisticalmechanicstransferlearning} study transfer learning in a model similar to ours using the replica method and similarly conclude that a feature-based metric for task similarity is predictive of transfer performance. 
    
    
    \paragraph{\textbf{Feature learning} }
    
    The feature space of a neural network evolves in the feature learning regime, which should be contrasted with the static neural tangent kernel limit, in which networks regress functions onto a set of basis functions which are fixed at initialization \cite{jacot2018neural}. Recently, there has been an explosion of interest in understanding dynamics in these two regimes of neural network optimization. \citet{woodworth2020kernel, atanasov2021neuralnetworkskernellearners, yang2021tensor, kunin2024get, yun2021unifyingviewimplicitbias, chizat2020sparse} focus on the role of initialization, learning rate, and implicit bias in feature learning. \citet{petrini_learning_2022} highlights the potential for overfitting when training in the feature learning regime. \citet{zhuensemble, zhupurification} give a fairly complete treatment of feature learning for synthetic datasets whose features are readily identifiable. \citet{chen2023understandingimprovingfeaturelearning} study a synthetic dataset closely related to that in \citet{zhuensemble} and describe how generalizable features are important to out of distribution generalization, a conclusion we also draw in the context of transfer learning.

\paragraph{Our contributions}

\label{contrib}
\begin{itemize}
    \item Since transfer performance depends on the learned representation of source task, we study deep linear networks trained on a linear target function, a setting in which training dynamics, implicit bias, and generalization error can be described rigorously. 
    \item Within this model, we analytically compute a phase diagram that illustrates how transfer learning performs relative to training from scratch on a given task.
    \item We prove that simple diagnostics, such as distributional measures of source-target distance are insufficient for predicting the success of transfer learning and advance the idea that task similarity should be measured in the space of features instead.
    \item We also compute the transfer phase diagram for nonlinear neural networks and show that the same picture applies to the reproducing kernel Hilbert space (RKHS) associated with the nonlinear features of the pre-trained network.
\end{itemize}

\section{General theoretical setting}
\label{gen_theory_setting}
We begin by introducing the general theoretical framework under which we study transfer learning. We consider \textit{source} and \textit{target} regression tasks defined by probability distributions $\psrc(\xb,y)$ and $\ptar(\xb,y)$ over inputs $\xb \in \mathbb{R}^d$ and labels $y \in \mathbb{R}$. We focus on \textit{concept drift}, in which  $\psrc(\xb,y) = p(\xb)\psrc(y \mid \xb)$ and $\ptar(\xb,y) = p(\xb)\ptar(y \mid \xb)$ for the same input distribution $p(\xb)$. The labels are generated from noisy samples of source and target functions 
\begin{math}
    y_{\textrm{s}} = f_{\textrm{s}}^*(\xb) + \epsilon_{\rm s} \textrm{ and } y_{\textrm{t}} = f_{\textrm{t}}^*(\xb) + \epsilon_{\rm t}
\end{math}
where $f_{\textrm{s}}^*(\xb), f_{\textrm{t}}^*(\xb) \in L_2(p(\xb))$ and $\epsilon_{\rm s}, \epsilon_{\rm t} \sim \mathcal{N}(0, \sigma^2)$. During \textit{pretraining}, we train a model with parameters $\Thetab = (\cb, \thetab)$ of the form 
\begin{equation}\label{general_model}
    f(\xb ; \Thetab) = \sum_{i=1}^m c_i \phi_i(\xb; \thetab)
\end{equation}
on the source task using a mean squared loss. Note that the features $\phi(\xb, \thetab)$ are left general and could for example represent final hidden activations of a deep neural network. After pretraining, the model is transferred by training a subset of the parameters $\Thetab' \subset \Thetab$ on the target task, while leaving $\Thetab - \Thetab'$ at their pretrained values. To model the modern setting for transfer learning, in which the number of data points in the source task far exceeds those in the target task, we train the source task on the population distribution and the target task on a finite dataset $\mathcal{D}$ of $n$ independent samples. 
\begin{align}
    \label{population_loss} 
    \mathcal{L}_{\textrm{s}}(\Thetab) &= \frac{1}{2}\EE_{\psrc(\xb, y)} [(f(\xb, \Thetab) - y)^2] \\ \label{empirical_loss}
    \mathcal{L}_{\textrm{t}}(\Thetab') &= \frac{1}{2} \hat{\EE}_{\ptar(\xb, y)}[(f(\xb, \Thetab) - y)^2]
\end{align}
where $\hat{\EE}_p(h(\xb, y)) = \frac{1}{n}\sum_{i=1}^n h(\xb_i, y_i)$ is the expectation over the empirical distribution of $\mathcal{D}$. We focus on two widely employed transfer methods, \textit{linear transfer} and \textit{fine-tuning}. In linear transfer, the pretrained features $\phi(\xb, \thetab)$ are frozen and only the output weights $\cb$ are trained on the target task. In fine-tuning, the entire set of parameters $\Thetab$ are trained from the pretrained initialization on the target task. In Appendix \ref{sec:ft_empsrc}, we explore the case when the source dataset is also finite, and show that for fine-tuning, the qualitative picture discussed below is unchanged. To optimize the loss functions (\ref{population_loss}) and (\ref{empirical_loss}), we use gradient flow,
\begin{equation}\label{gradient_flow}
    \frac{d\Thetab_i}{dt} = - \nabla_{\Thetab_i} \mathcal{L}(\Thetab),
\end{equation}
where we have set the learning rate equal to unity for the purpose of analysis. To assess the performance of transfer learning we compare the generalization of the transferred model to a \textit{scratch-trained} model with the same architecture (\ref{general_model}) trained on the target same data from a random initialization. We introduce the \textit{transferability} to quantify this relationship:
\begin{equation}\label{transferability}
 \mathcal{T} = \EE_{\mathcal{D}} (\mathcal{R}_{\textrm{sc}} - \mathcal{R}_{\textrm{tx}})
\end{equation}
where $\EE_{\mathcal{D}}$ is the expectation over iid draws of the training set and $\mathcal{R}_{\textrm{tx}}$ and $\mathcal{R}_{\textrm{sc}}$ are the generalization errors of the transferred model and scratch trained model, respectively, where the generalization error (or population risk) is given by, 
\begin{equation}
    \mathcal{R} = \E_{p(\xb)}[((f(\xb, \Thetab) - f^*(\xb))^2].
\end{equation}
We consider transfer learning successful when $\mathcal{T} > 0$, i.e., when the expected generalization of transfer learning outperforms training from scratch on the target task.  We refer to the situation $\mathcal{T} < 0$ as \textit{negative transfer}, since pretraining leads to degradation of the generalization error.

\section{Deep linear networks: an exactly solvable model}\label{sec:deep_lin}
Because insight into feature learning is required to understand the dynamics of transfer learning, we first consider an analytically solvable model of transfer learning using deep linear networks. 
This model gives us direct access to the evolution of the time-dependent feature space and its distortion under various transfer learning schemes. 
Let $f$ denote a deep linear network with $L$ layers
\begin{equation}\label{deep_linear_model}
    f(\xb) = \xb^T \Wb_1 \Wb_2 \dots \Wb_L
\end{equation}
where $\Wb_l \in \mathbb{R}^{d_{l-1}\times d_{l}}$ for $l \in [1,2,\dots L-1]$ and $\Wb_L \in \mathbb{R}^{d_{L-1} \times 1}$. For notational convenience we have renamed $\cb$ in (\ref{general_model}) as $\Wb_L$ and for simplicity we set $d_0 = d_1 = \dots = d_{L-1} = d$, the dimension of the data. The parameter matrices are initialized as $\Wb_l(0) = \alpha\bar{\Wb}_l$ where $\alpha \in \mathbb{R}$. The matrices $\Wb_l(0)$ additionally satisfy (\ref{init}), which is a technical assumption that generalizes common initialization schemes such as He initialization \cite{yun2021unifyingviewimplicitbias, he}. Since transfer learning relies on learning features in the source task, we initialize the network in the feature learning regime $\alpha \to 0$.
In the following, we assume: 
\begin{assumption}\label{data_assump}
    Assume that the input data $\xb\in \RR^d$ is normally distributed and that each dataset $\mathcal{D}$ consists of $n$ pairs $\{(\xb_i,y_i)\}_{i=1}^n$ sampled iid from $\ptar$ with Gaussian label noise of variance $\sigma^2.$
\end{assumption}
\begin{assumption}\label{func_assump}
    We assume that the source and target functions are each linear functions in $L_2(\RR^d, p)$; equivalently, $f_{\mathrm{s}}^*(\xb) = \betasrc^T \xb$, $f_{\mathrm{t}}^*(\xb) = \betatar^T \xb$ with $\normsq{\betasrc}{2} =\normsq{\betatar}{2} =1$.
\end{assumption}
To control the level of source-target task similarity, we fix the angle $\theta$ between the ground truth source and target functions so that $\betasrc^T \betatar = \cos\theta$. 
The source and target loss functions are given by (\ref{population_loss}) and (\ref{empirical_loss}). When training over the empirical loss, it is convenient to work in vector notation 
\begin{math}
    \mathcal{L}_{\textrm{t}}(\{\Wb_{l\le L}\}) = \frac{1}{2n}\lVert \yb_\textrm{t} - \Xb\Wb_1 \Wb_2 \dots \Wb_L \rVert^2_2
\end{math}
where $\Xb \in \mathbb{R}^{n \times d}$ and $\yb \in \mathbb{R}^n$. 
We study this model in the high dimensional limit in which $\gamma = n/d$ remains constant as $n,d \to \infty$.

Linear networks have the advantage of analytic tractability, but we note that the representation capacity of these models is limited to affine transformations.
Furthermore, the expressiveness of the model is independent of the number of layers.
As a result, this model may fail to capture aspects of transfer learning that depend strongly on depth separation ~\cite{telgarsky2016benefits, daniely2017depth} or other nonlinear phenomena.
However, overparameterized linear models, recapitulate many phenomena observed in deep learning, including double descent~\cite{nakkiran2021deep, belkin2019reconciling}, scaling laws~\cite{bahri2024explaining}, feature learning~\cite{vyas2024feature, atanasov2021neuralnetworkskernellearners} and, as we show, the impact of feature learning on transfer efficiency.

\subsection{Pretrained models represent source features}
To describe transfer efficiency in this setup, we need to understand the function that the model implements after training on the source task. We can describe the network in function space by tracking the evolution of $\bm{\beta}(t) = \Wb_1\Wb_2 \dots \Wb_L$ under gradient flow, such that the network function at any point in the optimization is $f(\bm{x}; t) = \bm{\beta}(t)^T \bm{x}$. The following Lemma establishes that pretraining perfectly learns the source task in the large source data limit.

\begin{lemma}
\label{linear_pop_func}
    Under gradient flow~(\ref{gradient_flow}) on the population risk objective (\ref{population_loss}) with initialization satisfying (\ref{init}), $\lim_{t \to \infty} \bm{\beta}(t)= \betasrc$
\end{lemma}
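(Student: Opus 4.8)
The plan is to reduce the gradient flow on the weight matrices $\{\Wb_l\}$ to an autonomous flow on the end-to-end vector $\bm\beta(t)=\Wb_1\Wb_2\cdots\Wb_L$ and then show this reduced flow converges to the unique minimizer of the population risk. First, under Assumption~\ref{data_assump} the population risk depends on $\Thetab$ only through $\bm\beta$: writing $\mSigma=\EE[\xb\xb^T]$ (equal to $\Ib$ for isotropic inputs) and using $y_{\mathrm s}=\betasrc^T\xb+\epsilon$ with independent noise,
\[ \mathcal{L}_{\mathrm s}(\Thetab)=\tfrac12(\bm\beta-\betasrc)^T\mSigma(\bm\beta-\betasrc)+\tfrac12\sigma^2, \]
which is strictly convex and coercive in $\bm\beta$ with unique global minimizer $\betasrc$, so it suffices to prove $\bm\beta(t)\to\betasrc$. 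Differentiating $\bm\beta$ and writing $\bm g=\nabla_{\bm\beta}\mathcal{L}_{\mathrm s}=\mSigma(\bm\beta-\betasrc)$, so that $\dot\Wb_l=-\nabla_{\Wb_l}\mathcal{L}_{\mathrm s}=-(\Wb_1\cdots\Wb_{l-1})^T\bm g\,(\Wb_{l+1}\cdots\Wb_L)^T$, gives
\[ \dot{\bm\beta}=-P(t)\,\bm g,\qquad P(t)=\sum_{l=1}^{L}\bigl\|\Wb_{l+1}\cdots\Wb_L\bigr\|^2\,(\Wb_1\cdots\Wb_{l-1})(\Wb_1\cdots\Wb_{l-1})^T\succeq 0 \]
(empty products read as $\Ib$ or $1$). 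The quantities $\D{l}$ are conserved along the flow (differentiate and substitute the gradient-flow equations), so the initialization~(\ref{init}) — balanced in the limit $\alpha\to0$, since the value of $\D{l}$ at $t=0$ is $\alpha^2\,\Dbar{l}$ — forces $\Wb_l^T\Wb_l=\Wb_{l+1}\Wb_{l+1}^T$ for all $t$; under this balancedness $P(t)$ collapses to a function of $\bm\beta$ alone, and the standard deep-linear-network reduction gives $P(t)=\|\bm\beta\|^{2(L-1)/L}\bigl(\Ib+(L-1)\hat{\bm\beta}\hat{\bm\beta}^T\bigr)$ with $\hat{\bm\beta}=\bm\beta/\|\bm\beta\|$, positive definite whenever $\bm\beta\neq\vzero$.

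The convergence itself I would get from a LaSalle-type argument with $\mathcal{L}_{\mathrm s}$ as Lyapunov function: along the flow $\dot{\mathcal{L}}_{\mathrm s}=-\bm g^T P\,\bm g\le 0$, coercivity keeps $\bm\beta(t)$ bounded, the value $\mathcal{L}_{\mathrm s}(\bm\beta(t))$ converges and $\bm g^T P\bm g\to0$, and since $P\succ 0$ away from the origin the $\omega$-limit set is contained in $\{\betasrc\}\cup\{\vzero\}$ (hence, by connectedness, a single one of these two equilibria). It then remains only to exclude convergence to $\bm\beta=\vzero$: from the explicit form of $P$ one computes $\tfrac{d}{dt}\tfrac12\|\bm\beta\|^2=-L\|\bm\beta\|^{2(L-1)/L}\bigl(\|\bm\beta\|^2-\bm\beta^T\betasrc\bigr)$, which is strictly positive as long as $\bm\beta$ is small and has positive overlap with $\betasrc$, so the origin is repelling along the relevant mode and $\|\bm\beta(t)\|$ stays bounded away from $0$; combined with strict convexity this forces $\mathcal{L}_{\mathrm s}(\bm\beta(t))\to\tfrac12\sigma^2$ and therefore $\bm\beta(t)\to\betasrc$.

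I expect this last step — excluding the saddle at $\bm\beta=\vzero$ in the feature-learning limit $\alpha\to0$, where the initialization is taken arbitrarily close to that very saddle — to be the main obstacle, and the place where the precise statement of~(\ref{init}) is needed (it must ensure a nonzero initial overlap $\bar{\bm\beta}^T\betasrc$ with $\bar{\bm\beta}=\bar\Wb_1\cdots\bar\Wb_L$, or else be handled by a separate limiting/genericity argument). The remaining ingredients — the closed form of $\mathcal{L}_{\mathrm s}$, the conservation of $\D{l}$, and the collapse of $P(t)$ under balancedness — are routine and I would not dwell on them.
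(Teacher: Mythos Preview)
Your reduction to $\dot{\bm\beta}=-P(t)\,\bm g$ is fine, but the next move---taking the $\alpha\to0$ limit to force exact balancedness $\D{l}=0$ and then invoking the closed-form $P(t)=\|\bm\beta\|^{2(L-1)/L}(\Ib+(L-1)\hat{\bm\beta}\hat{\bm\beta}^T)$---is where the argument goes wrong. The lemma is stated for a fixed initialization satisfying~(\ref{init}), i.e.\ for fixed $\alpha>0$, and under~(\ref{init}) the conserved matrices are $\Db_l=\alpha^2(\Dbar{l})\succcurlyeq\alpha^2\lambda\Ib\neq 0$: the network is \emph{never} balanced, so the Arora-style collapse of $P(t)$ to a function of $\bm\beta$ alone does not hold, and your downstream LaSalle argument (whose only nontrivial step is excluding $\bm\beta=\vzero$) is built on an invalid reduction. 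You also misread the role of~(\ref{init}): it says nothing about the sign of the initial overlap $\bar{\bm\beta}^T\betasrc$, and any argument that needs that sign is not using the hypothesis you actually have.

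The paper's proof uses~(\ref{init}) in precisely the place where your argument stalls. Keep your expression for $P(t)$ but do \emph{not} pass to the balanced limit; instead note that the $l=L$ summand of $P(t)$ is $(\Wb_1\cdots\Wb_{L-1})(\Wb_1\cdots\Wb_{L-1})^T$. Iterating the conservation law $\Wb_l^T\Wb_l=\Wb_{l+1}\Wb_{l+1}^T+\Db_l$ with $\Db_l\succcurlyeq\alpha^2\lambda\Ib$ gives a uniform-in-time lower bound $\sigma_{\min}^2(\Wb_1\cdots\Wb_{L-1})\ge(\alpha^2\lambda)^{L-1}$, hence $P(t)\succcurlyeq(\alpha^2\lambda)^{L-1}\Ib$ for all $t$. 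This turns your Lyapunov inequality into $\dot{\mathcal L}_{\mathrm s}\le -2(\alpha^2\lambda)^{L-1}(\mathcal L_{\mathrm s}-\tfrac12\sigma^2)$, forcing exponential convergence of $\bm\beta(t)$ to $\betasrc$ for every $\alpha>0$. The ``main obstacle'' you flagged---escape from the saddle at $\vzero$---simply does not arise: the strict positivity $\lambda>0$ in~(\ref{init}), not any overlap condition, is what keeps $P(t)$ uniformly nondegenerate and rules out $\vzero$ as a limit point.
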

We prove Lemma \ref{linear_pop_func} in Appendix~\ref{sec:linear_pop_func_proof}.
While this result establishes recovery of the ground truth on the source task, it does not describe the feature space of the pretrained model, which is relevant for transferability. To this end, following \citet{yun2021unifyingviewimplicitbias},  we show that in the feature learning regime $\alpha \to 0$, the hidden features of the model sparsify to those present in the source task. 
\begin{theorem}[Yun et al]\label{sparsification}
    Let the columns of $\Phib = \Wb_1 \Wb_2 \cdots \Wb_{L-1}$ denote the hidden features of the model. After pretraining
    \[
    \lim_{\alpha \to 0} \lim_{t\to\infty} \Phib = \betasrc \vb_{L-1}^T
    \]

    for some vector $\vb_{L-1}$. 
\end{theorem}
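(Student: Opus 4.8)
The plan is to combine the function-space convergence of Lemma~\ref{linear_pop_func} with the balancedness (conservation law) of gradient flow in deep linear networks, which in the vanishing-initialization limit $\alpha \to 0$ forces all the layer Gram matrices to align. The key object is $\Phib = \Wb_1 \cdots \Wb_{L-1}$, and we want to show its image collapses to the one-dimensional span of $\betasrc$.

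\textbf{Step 1: Conservation of layer imbalance.} Under gradient flow on any loss of the form $\mathcal{L}(\Wb_1,\dots,\Wb_L)$ that depends only on the product $\Wb_1\cdots\Wb_L$, one has the classical invariant $\frac{d}{dt}\left(\Wb_l^T\Wb_l - \Wb_{l+1}\Wb_{l+1}^T\right) = 0$ for each $l$ (this is the quantity $\D{l}$ in the paper's notation). Hence $\Wb_l^T\Wb_l - \Wb_{l+1}\Wb_{l+1}^T = \alpha^2\left(\bar{\Wb}_l^T\bar{\Wb}_l - \bar{\Wb}_{l+1}\bar{\Wb}_{l+1}^T\right)$ for all $t$. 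Taking $\alpha \to 0$, the right-hand side vanishes, so in the limit all consecutive Gram matrices agree: $\Wb_l^T\Wb_l = \Wb_{l+1}\Wb_{l+1}^T$ for $l = 1,\dots,L-1$ at every finite time, and I would argue this persists in the $t\to\infty$ limit by the uniform convergence already used to prove Lemma~\ref{linear_pop_func}.

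\textbf{Step 2: From balancedness to rank-one structure.} Let $\Wb_L(\infty) = \vb_{L-1}$ (a column vector, since $d_L = 1$). Then $\Wb_L\Wb_L^T = \vb_{L-1}\vb_{L-1}^T$ is rank one. Balancedness propagates this down: $\Wb_{L-1}^T\Wb_{L-1} = \Wb_L\Wb_L^T$ is rank one, so $\Wb_{L-1}$ has rank one and can be written as $\Wb_{L-1} = \ub_{L-2}\vb_{L-1}^T$ with $\norm{\ub_{L-2}}{2} = \norm{\vb_{L-1}}{2}$; iterating, each $\Wb_l = \ub_{l-1}\ub_l^T$ is rank one with matched singular values. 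Multiplying out, $\Phib = \Wb_1\cdots\Wb_{L-1} = \ub_0\left(\prod_{l}\ub_{l-1}^T\ub_l\right)\vb_{L-1}^T = c\,\ub_0\vb_{L-1}^T$ for a scalar $c$, so $\Phib$ is rank one with column space $\mathrm{span}(\ub_0)$. Finally, $\bm{\beta}(\infty) = \Phib\,\Wb_L(\infty) = c\,\ub_0\,(\vb_{L-1}^T\vb_{L-1}) = c'\ub_0$, and Lemma~\ref{linear_pop_func} gives $\bm{\beta}(\infty) = \betasrc$, so $\ub_0 \parallel \betasrc$; rescaling the constant into $\vb_{L-1}$ yields $\Phib = \betasrc\vb_{L-1}^T$.

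\textbf{The main obstacle} is making the order of limits rigorous: the conservation law holds exactly for fixed $\alpha$, but the rank-one conclusion requires $\alpha \to 0$, and we need the $t\to\infty$ limit to commute with this. Concretely, for $\alpha > 0$ the Gram matrices are only approximately equal (error $O(\alpha^2)$) and $\Phib$ is only approximately rank one, so I would need a quantitative statement that $\bm{\beta}(t)\to\betasrc$ and the balancedness defect stays $O(\alpha^2)$ \emph{uniformly} in $t$, then take $\alpha\to 0$. This is exactly the kind of uniform control established in \citet{yun2021unifyingviewimplicitbias}, whose argument I would invoke; a secondary subtlety is ruling out the degenerate case $\vb_{L-1} = \vzero$ (equivalently $\betasrc = \vzero$), which is excluded by Assumption~\ref{func_assump} with $\normsq{\betasrc}{2} = \normsq{\betatar}{2} > 0$ once we fix a nonzero normalization.
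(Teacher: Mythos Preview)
Your proposal is correct and follows essentially the same route as the paper: both arguments use the conserved layer-imbalance $\Wb_l^T\Wb_l - \Wb_{l+1}\Wb_{l+1}^T = O(\alpha^2)$ to force each $\Wb_l$ to be (approximately) rank one with aligned singular vectors, then identify the left singular direction of $\Phib$ via Lemma~\ref{linear_pop_func}. The only cosmetic difference is that the paper packages the rank-one and alignment statements into a separate lemma bounding $\normsq{\Wb_l}{F}-\normsq{\Wb_l}{\mathrm{op}}$ and $(\vb_l^T\ub_{l+1})^2$ explicitly for fixed $\alpha>0$, whereas you propagate rank-one backward from the vector $\Wb_L$ after passing to the $\alpha\to 0$ limit; you correctly flag that making the limit interchange rigorous is exactly where the quantitative $O(\alpha^2)$ bounds of \citet{yun2021unifyingviewimplicitbias} are needed.
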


We prove Theorem \ref{sparsification} in Appendix \ref{sec:sparsification_proof}. Theorem \ref{sparsification} demonstrates that after pretraining in the feature learning regime, the $d$-dimensional feature space of the model parsimoniously represents the ground truth function in a single, rank-one component. We refer to this phenomenon as feature sparsification, which is a hallmark of the feature learning regime, and has important consequences for transferability, particularly in the linear setting \Secref{sec:linear_transfer}.

\subsection{Scratch trained models represent minimum norm solutions}
For the empirical training objective (\ref{empirical_loss}), there are multiple zero training error solutions when the model is overparameterized $\gamma < 1$. As noted in \citet{yun2021unifyingviewimplicitbias} and \citet{atanasov2021neuralnetworkskernellearners}, there is an implicit bias of gradient flow to the minimum norm solution when $\alpha \to 0$

\begin{theorem}[~\citep{yun2021unifyingviewimplicitbias}]\label{linear_emp_func}
    Under gradient flow on the empirical risk minimization objective (\ref{empirical_loss}) with initialization satisfying (\ref{init}), $\lim_{\alpha \to 0}\lim_{t \to \infty} \bm{\beta}(t)= \hat{\bm{\beta}}$, where $\hat{\bm{\beta}}$ is the minimum norm solution to the linear least squares problem
    \[
    \hat{\bm{\beta}} = \argmin_{\bm{\beta}} \frac{1}{2n} \lVert \yb_\mathrm{t} - \Xb\bm{\beta}\rVert^2_2 = \Xb^+ \yb_\mathrm{t}
    \]
\end{theorem}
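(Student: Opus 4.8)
The plan is to pass from the gradient flow on the factors $\{\Wb_l\}$ to an autonomous flow on the end-to-end vector $\bm{\beta}(t)=\Wb_1(t)\cdots\Wb_L(t)$, and then to pin down $\lim_{t\to\infty}\bm{\beta}(t)$ (in the limit $\alpha\to 0$) by two properties: it is a zero-loss interpolant, and it lies in $\row(\Xb)$. Since the minimum-$\ell_2$-norm least-squares solution $\Xb^+\yb_{\mathrm{t}}$ is exactly the unique interpolant of $\Xb\bm{\beta}=\yb_{\mathrm{t}}$ that lies in $\row(\Xb)$, establishing these two properties identifies the limit as $\hat{\bm{\beta}}$. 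This is the fully-connected specialization of the argument in \citep{atanasov2021neuralnetworkskernellearners, yun2021unifyingviewimplicitbias}, whose route I would follow.

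First I would record the layer-wise conservation law of deep linear gradient flow: for $l=1,\dots,L-1$ one has $\frac{d}{dt}\bigl(\D{l}\bigr)=0$, so along the flow $\D{l}=\alpha^2\,\bigl(\Dbar{l}\bigr)$, and hence in the limit $\alpha\to 0$ the weights become exactly balanced, $\Wb_l^T\Wb_l=\Wb_{l+1}\Wb_{l+1}^T$. For balanced weights the end-to-end vector obeys the standard reduced flow for fully-connected linear networks,
\begin{equation}\label{reduced_flow_plan}
\dot{\bm{\beta}}=-P_L(\bm{\beta})\,\grad_{\bm{\beta}}\mathcal{L}_{\mathrm{t}}(\bm{\beta}),\qquad P_L(\bm{\beta})=\norm{\bm{\beta}}{2}^{2-2/L}\Bigl(\Ib+(L-1)\tfrac{\bm{\beta}\bm{\beta}^T}{\norm{\bm{\beta}}{2}^2}\Bigr)\succ 0\ \text{ for }\bm{\beta}\neq 0 .
\end{equation}
The two structural facts I would extract from (\ref{reduced_flow_plan}) are: (i) $\dot{\bm{\beta}}\in\mathrm{span}\{\bm{\beta},\ \grad_{\bm{\beta}}\mathcal{L}_{\mathrm{t}}\}$, because $\bm{\beta}\bm{\beta}^T\grad\mathcal{L}_{\mathrm{t}}\parallel\bm{\beta}$; and (ii) $\grad_{\bm{\beta}}\mathcal{L}_{\mathrm{t}}=\tfrac1n\Xb^T(\Xb\bm{\beta}-\yb_{\mathrm{t}})\in\row(\Xb)$ at every time.

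Now decompose $\bm{\beta}=\bm{\beta}_{\parallel}+\bm{\beta}_{\perp}$ orthogonally with respect to $\row(\Xb)$. Facts (i)--(ii) give $\dot{\bm{\beta}}_{\perp}=c(t)\,\bm{\beta}_{\perp}$ for a scalar $c(t)$; comparing with $\frac{d}{dt}\log\norm{\bm{\beta}}{2}$ computed from (\ref{reduced_flow_plan}) yields $\frac{d}{dt}\log\norm{\bm{\beta}_{\perp}}{2}=\frac{L-1}{L}\frac{d}{dt}\log\norm{\bm{\beta}}{2}$, hence $\norm{\bm{\beta}_{\perp}(t)}{2}=\norm{\bm{\beta}_{\perp}(0)}{2}\bigl(\norm{\bm{\beta}(t)}{2}/\norm{\bm{\beta}(0)}{2}\bigr)^{(L-1)/L}$. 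Since $\norm{\bm{\beta}(0)}{2}=\alpha\,\norm{\bar{\Wb}_1\cdots\bar{\Wb}_L}{2}$ and $\norm{\bm{\beta}_{\perp}(0)}{2}=O(\alpha)$, this is $O(\alpha^{1/L})$ as long as $\norm{\bm{\beta}(t)}{2}$ stays bounded, so any limit point $\bm{\beta}_{\infty}$ of $\lim_{\alpha\to 0}\lim_{t\to\infty}\bm{\beta}(t)$ lies in $\overline{\row(\Xb)}$. In parallel, $\frac{d}{dt}\mathcal{L}_{\mathrm{t}}=-\grad\mathcal{L}_{\mathrm{t}}^T P_L(\bm{\beta})\grad\mathcal{L}_{\mathrm{t}}\le 0$, and once $\norm{\bm{\beta}}{2}$ is bounded below (after the initial escape from the origin) $P_L$ is uniformly positive definite along the trajectory, forcing $\grad_{\bm{\beta}}\mathcal{L}_{\mathrm{t}}\to 0$, i.e.\ $\Xb\bm{\beta}_{\infty}=\yb_{\mathrm{t}}$. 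The two conditions $\bm{\beta}_{\infty}\in\overline{\row(\Xb)}$ and $\Xb\bm{\beta}_{\infty}=\yb_{\mathrm{t}}$ are precisely the KKT conditions of $\min_{\bm{\beta}}\{\tfrac12\normsq{\bm{\beta}}{2}:\Xb\bm{\beta}=\yb_{\mathrm{t}}\}$, whose unique solution is $\Xb^+\yb_{\mathrm{t}}$, so $\bm{\beta}_{\infty}=\hat{\bm{\beta}}$.

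The hard part will be the interchange of the limits $t\to\infty$ and $\alpha\to 0$, complicated by the fact that for $L\ge 2$ the origin is a fixed point of (\ref{reduced_flow_plan}), so one cannot simply set $\alpha=0$: one must show that for each sufficiently small $\alpha>0$ the flow escapes the origin (this is where the nondegeneracy built into the initialization enters), converges, and interpolates, with $\norm{\bm{\beta}(t)}{2}$ bounded above and ultimately bounded below uniformly in $\alpha$, and that $\normsq{\bm{\beta}_{\infty}}{2}-\normsq{\hat{\bm{\beta}}}{2}\to 0$ as $\alpha\to 0$. Controlling the escape phase --- where the dynamics is governed by the leading singular direction of the data and the effective time scales like $\log(1/\alpha)$ --- is the delicate step; I would import those estimates from \citet{yun2021unifyingviewimplicitbias} (see also \citet{woodworth2020kernel}), or, for a self-contained treatment, use the conserved quantities above to reduce to a one-dimensional comparison argument along $\row(\Xb)$.
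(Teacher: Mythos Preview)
Your route differs substantially from the paper's, and the difference matters.

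The paper never passes to an autonomous flow on $\bm{\beta}$. It works at the layer level and \emph{exploits}, rather than discards, the finite-$\alpha$ imbalance. For each fixed $\alpha>0$, the conserved quantity $\Wb_l^T\Wb_l-\Wb_{l+1}\Wb_{l+1}^T=\alpha^2(\bar{\Wb}_l^T\bar{\Wb}_l-\bar{\Wb}_{l+1}\bar{\Wb}_{l+1}^T)\succcurlyeq\alpha^2\lambda\Ib$ is chained across layers to give $\sigma_{\min}^2(\Wb_{L-1}^T\cdots\Wb_1^T)\ge(\alpha^2\lambda)^{L-1}$, whence $\dot{\mathcal{L}}_\parallel\le -2(\alpha^2\lambda)^{L-1}\lambda_{\min}(\Xb\Xb^T)\,\mathcal{L}_\parallel$ and exponential convergence to a global minimizer for every $\alpha>0$; no escape-from-origin analysis is needed. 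For membership in $\row(\Xb)$, the paper observes that $\grad_{\Wb_1}\mathcal{L}$ carries a left factor $\Xb^T$, so the projection of each column of $\Wb_1$ onto $\row(\Xb)^\perp$ is \emph{exactly} conserved and hence $O(\alpha)$ in norm; combined with the rank-one structure of Lemma~\ref{rank1} this puts $\lim_{t\to\infty}\bm{\beta}$ within $O(\alpha)$ of $\row(\Xb)$. The limit $\alpha\to 0$ is taken only on the endpoint, bypassing any interchange issue.

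By contrast, your reduced flow $\dot{\bm{\beta}}=-P_L(\bm{\beta})\grad_{\bm{\beta}}\mathcal{L}_{\mathrm{t}}$ holds only for \emph{balanced} weights, whereas assumption~(\ref{init}) is strictly unbalanced; at finite $\alpha$ the true preconditioner depends on all the $\Wb_l$ individually, not on $\bm{\beta}$ alone, so this equation never governs the actual trajectory. Your elegant relation $\frac{d}{dt}\log\norm{\bm{\beta}_\perp}{2}=\frac{L-1}{L}\frac{d}{dt}\log\norm{\bm{\beta}}{2}$ is therefore a statement about a different flow, and applying it here requires a perturbation argument you do not supply. (A small slip: $\norm{\bm{\beta}(0)}{2}=\alpha^L\norm{\bar{\Wb}_1\cdots\bar{\Wb}_L}{2}$, not $\alpha$; redoing your computation with the correct scaling gives $\norm{\bm{\beta}_\perp}{2}=O(\alpha)$ rather than $O(\alpha^{1/L})$, which still suffices.) What you flag as ``the hard part'' is precisely what the paper avoids by keeping the imbalance as a singular-value floor; the estimates you propose to import from \citet{yun2021unifyingviewimplicitbias} are in fact this layer-level argument, not a balanced-flow analysis.
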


We prove Theorem \ref{linear_emp_func} in Appendix \ref{sec:linear_emp_func_proof}. Knowing the final predictor of the empirical training also allows us to compute the generalization error of the scratch trained model

\begin{theorem}\label{scratch_ge}
Under gradient flow on the empirical objective (\ref{empirical_loss}), in the high dimensional limit the expectation of the final generalization error over training data is
\begin{equation}\label{scratch_ge_exp}
    \mathbb{E}_{\mathcal{D}} \mathcal{R} = 
    \begin{cases}
        \frac{(1-\gamma)^2 + \gamma \sigma^2}{1-\gamma} & \gamma < 1 \\
        \frac{\sigma^2}{\gamma -1} & \gamma > 1
    \end{cases}
\end{equation}
\end{theorem}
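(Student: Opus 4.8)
The plan is to reduce the population risk of the scratch-trained model to a squared distance in parameter space and then evaluate a bias term (governed by the geometry of a random subspace) and a variance term (governed by an inverse-Wishart trace), treating the regimes $\gamma<1$ and $\gamma>1$ separately. Since the input law $p(\xb)$ is isotropic Gaussian and both $f_\mathrm{t}^*$ and the trained network are linear, $\mathcal{R} = \EE_{p(\xb)}[((\hat{\bm\beta}-\betatar)^T\xb)^2] = \normsq{\hat{\bm\beta}-\betatar}{2}$, where by Theorem~\ref{linear_emp_func} the scratch predictor is $\hat{\bm\beta} = \Xb^+\yb_\mathrm{t}$. Writing $\yb_\mathrm{t} = \Xb\betatar + \epsilonb$ with $\epsilonb\sim\mathcal{N}(\vzero,\sigma^2\Ib)$ gives the decomposition $\hat{\bm\beta}-\betatar = (\Xb^+\Xb-\Ib)\betatar + \Xb^+\epsilonb$, and the expectation $\EE_{\mathcal{D}}$ runs over both $\Xb$ and $\epsilonb$.

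For $\gamma<1$ the matrix $\Xb$ has full row rank almost surely, so $\Xb^+\Xb = \Pb$ is the orthogonal projector onto $\row(\Xb)$, $\Xb^+\Xb-\Ib = -\Pb^\perp$, and $\Xb^+\epsilonb = \Xb^T(\Xb\Xb^T)^{-1}\epsilonb \in \row(\Xb)$. Because $\Pb^\perp\betatar\perp\row(\Xb)$, the two contributions are orthogonal pointwise, so $\mathcal{R} = \normsq{\Pb^\perp\betatar}{2} + \normsq{\Xb^+\epsilonb}{2}$. For the bias term, rotational invariance of the Gaussian row ensemble makes $\row(\Xb)$ a uniformly random $n$-dimensional subspace, so $\EE[\Pb^\perp] = \tfrac{d-n}{d}\Ib$ and, using $\normsq{\betatar}{2}=1$ from Assumption~\ref{func_assump}, $\EE_{\mathcal{D}}\normsq{\Pb^\perp\betatar}{2} = 1 - n/d \to 1-\gamma$. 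For the variance term, $\normsq{\Xb^+\epsilonb}{2} = \epsilonb^T(\Xb\Xb^T)^{-1}\epsilonb$, so $\EE_{\epsilonb}\normsq{\Xb^+\epsilonb}{2} = \sigma^2\,\tr{(\Xb\Xb^T)^{-1}}$; since $\Xb\Xb^T$ is an $n\times n$ Wishart matrix with identity scale and $d$ degrees of freedom, the identity $\EE[\tr{(\Xb\Xb^T)^{-1}}] = n/(d-n-1)$ gives $\gamma\sigma^2/(1-\gamma)$ in the limit $n/d\to\gamma$. Adding the two pieces yields $(1-\gamma) + \gamma\sigma^2/(1-\gamma) = ((1-\gamma)^2+\gamma\sigma^2)/(1-\gamma)$.

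For $\gamma>1$, $\Xb$ has full column rank almost surely, so $\Xb^+\Xb = \Ib_d$, the bias term vanishes, and $\hat{\bm\beta}-\betatar = \Xb^+\epsilonb$ with $\Xb^+ = (\Xb^T\Xb)^{-1}\Xb^T$. Then $\EE_{\mathcal{D}}\mathcal{R} = \sigma^2\,\EE[\tr{(\Xb^T\Xb)^{-1}}] = \sigma^2\, d/(n-d-1) \to \sigma^2/(\gamma-1)$ by the same inverse-Wishart moment, now applied to a $d\times d$ Wishart with $n$ degrees of freedom.

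The linear-algebra bookkeeping and the two moment identities are routine; the step that needs care is the passage to the high-dimensional limit. The finite-size expectations are exactly $1 - n/d + (n/d)\sigma^2/(1-(n+1)/d)$ for $\gamma<1$ and $\sigma^2 d/(n-d-1)$ for $\gamma>1$, and I would record that along any sequence with $n/d\to\gamma$ these are eventually well-defined (one needs $d-n-1\ge 1$, resp. $n-d-1\ge1$) and converge to the claimed values, the $\pm1$ shifts in the inverse-Wishart mean being subleading. I expect this to be the main thing to state carefully rather than a genuine obstacle; an alternative that avoids the exact Wishart formula is to invoke the Marchenko--Pastur law for the empirical spectrum of $\Xb\Xb^T/d$ together with a uniform-integrability bound on $\tr{(\Xb\Xb^T)^{-1}}$ to identify it with its deterministic limit $\int \lambda^{-1}\,d\mu_{\mathrm{MP}}(\lambda)$.
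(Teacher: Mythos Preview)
Your proof is correct and follows the same overall bias--variance decomposition as the paper: both identify the bias term via the random-projection moment $\EE\normsq{\Pb^\perp\betatar}{2}=1-\gamma$ and the variance term via $\sigma^2\,\EE\,\tr{(\Xb^+)^T\Xb^+}$. The only substantive difference is in how the trace is evaluated: you invoke the exact inverse-Wishart mean $\EE[\tr{(\Xb\Xb^T)^{-1}}]=n/(d-n-1)$ and then pass to the limit, whereas the paper works directly in the high-dimensional limit by writing the trace as $-\gamma\lim_{z\to 0}\mathfrak{g}_{\Cb}(z)$ for the Stieltjes transform of $\Cb=\tfrac{1}{d}\Xb\Xb^T$ and reading off the Marchenko--Pastur answer. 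Your route is more elementary and yields explicit finite-$(n,d)$ expressions (making the ``$\pm 1$ shifts are subleading'' step transparent), while the paper's RMT route---which you already flag as an alternative---bypasses the Gaussian-specific Wishart identity.
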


Theorem (\ref{scratch_ge}) is a known result for linear regression \citep{hastie2022surprises, canatar2021spectral, belkin2019reconciling, advani2016statistical, mel2021theory, bartlett2020benign}, but we provide a proof based on random projections and random matrix theory in \Secref{sec:scratch_ge_proof}. This expression exhibits double descent behavior: in the overparameterized regime, the generalization error first decreases, then becomes infinite as $\gamma \to 1$, while in the underparameterized regime, the generalization error monotonically decreases with increasing $\gamma$. As we will see in \Secref{sec:linear_transfer}, this double descent behavior leads to two distinct regions in the transferability phase diagram. The fact that scratch-trained performance can be arbitrarily bad is a result of the implicit regularization of gradient flow on this model. This effect can be eliminated by appropriately regularizing the scratch-trained model with weight decay. In the interest of analytic tractability we do not include regularization when training from scratch, but we explore its effects in simulation in Appendix \ref{sec:plots} Fig. \ref{fig:scratch_reg_phase}

\subsection{Linear transfer}\label{sec:linear_transfer}
The simplest transfer learning method is known as linear transfer, in which only the final layer weights of the pretrained network are trained on the target task. In particular, $\{\bm{W}_l\}_{l \le L-1}$ are fixed after pretraining and $\hat{\Wb}_L$ solves the linear regression problem with features $\bm{\Phi} = \Xb \Wb_1 \dots \Wb_{L-1}$.
\begin{equation}\label{linear_transfer}
    \hat{\Wb}_L = \argmin_{\hat{\Wb}_L \in \mathbb{R}^d} \frac{1}{2n}\normsq{\Phib\Wb_L - \yb_{\mathrm{t}}}{2}
\end{equation}
When there are multiple solutions to the optimization problem (\ref{linear_transfer}), we choose the solution with minimum norm. We characterize the generalization error of linear transfer in the following theorem. 
\begin{figure*}[h]
    \includegraphics[width=1\linewidth]{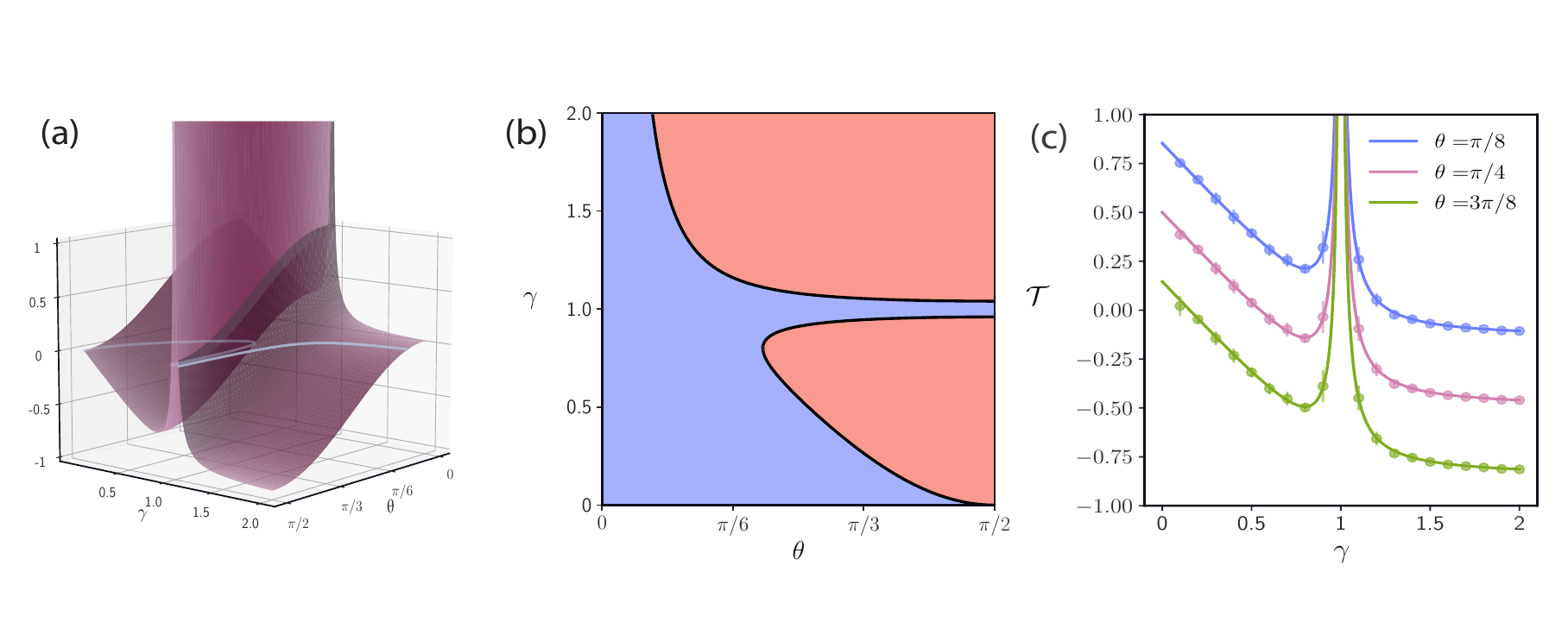}
    \caption{\textbf{Linear transferability phase diagram.} We pretrain a linear network (\ref{deep_linear_model}) with $L=2$ and $d=500$ to produce labels from linear source function $y = \betasrc^T \xb + \epsilon$ using the population loss (\ref{population_loss}). We then retrain the final layer weights on a sample of $n = \gamma d$ points $(\xb_i, y_i = \betatar^T \xb_i + \epsilon_i)$ where $\betasrc^T\betatar = \cos\theta$ and $\epsilon_i \sim \mathcal{N}(0, \sigma = 0.2)$, and compare its generalization error to that of a model trained from scratch on the target dataset. \textbf{(a)} The theoretical transferability surface (\ref{linear_transferability}) as a function of target dataset size $\gamma$ and task overlap $\theta$. Light blue lines indicate the boundary between positive and negative transfer. \textbf{(b)} Top-down view of Fig. \ref{fig:linear_transfer}(a) shaded by sign of transferability. Red regions indicate negative transfer $\mathcal{T} < 0$, blue region indicates positive transfer $\mathcal{T} > 0$. \textbf{(c)} Slices of the transferability surface (\ref{linear_transferability}) for constant $\theta$. Solid lines represent theoretical values, circles are points from experiments. Error bars represent the standard deviation over $20$ draws of the target set.}
    \label{fig:linear_transfer}
\end{figure*}

\begin{theorem}\label{lt_ge}
    Under Assumptions~\ref{data_assump} and \ref{func_assump}, and assuming the source-target overlap is $\theta,$ the expected generalization error of the linearly transferred model is an explicit function of $\theta$, the label noise $\sigma$, and the dataset size $n$: 
    \begin{equation}\label{lt_ge_exp}
         \mathbb{E}_{\mathcal{D}} \mathcal{R}_{\mathrm{lt}} = \sin^2\theta + \frac{\sigma^2 + \sin^2\theta}{n-2}.
    \end{equation}
\end{theorem}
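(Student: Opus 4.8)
The plan is to use the feature sparsification of Theorem~\ref{sparsification} to collapse linear transfer to a scalar regression, and then evaluate the risk by a short Gaussian computation. In the feature-learning limit $\alpha\to 0$, Theorem~\ref{sparsification} gives $\Wb_1\Wb_2\cdots\Wb_{L-1}\to\betasrc\vb_{L-1}^T$, so the data feature matrix $\Phib=\Xb\Wb_1\cdots\Wb_{L-1}$ in (\ref{linear_transfer}) becomes the rank-one matrix $(\Xb\betasrc)\vb_{L-1}^T$. Writing $\zb=\Xb\betasrc\in\RR^n$ and $a=\vb_{L-1}^T\Wb_L$, the objective in (\ref{linear_transfer}) equals $\tfrac{1}{2n}\normsq{\zb a-\yb_{\mathrm t}}{2}$ and depends on $\Wb_L$ only through the scalar $a$; hence the fitted output is governed by $\hat a=\zb^T\yb_{\mathrm t}/\normsq{\zb}{2}$, and the transferred predictor is $f(\xb)=\hat a\,\betasrc^T\xb$. (One checks the minimum-norm tie-break, or equivalently gradient flow on $\Wb_L$ from the pretrained value, produces exactly this function.)

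Next I would compute the population risk at a fixed $\hat a$. Using $\normsq{\betasrc}{2}=\normsq{\betatar}{2}=1$ (Assumption~\ref{func_assump}), write $\betatar=\cos\theta\,\betasrc+\sin\theta\,\ub$ with $\ub\perp\betasrc$, $\normsq{\ub}{2}=1$. Since the inputs are standard Gaussian, $\EE[\xb\xb^T]=\Ib$, so
\[
\mathcal{R}_{\mathrm{lt}}=\EE_{p(\xb)}\!\big[(\hat a\,\betasrc^T\xb-\betatar^T\xb)^2\big]=\normsq{\hat a\,\betasrc-\betatar}{2}=(\hat a-\cos\theta)^2+\sin^2\theta .
\]
This isolates the irreducible error $\sin^2\theta$, coming from the component of $\betatar$ outside the one-dimensional pretrained feature space, and reduces the claim to $\EE_{\mathcal D}(\hat a-\cos\theta)^2=(\sigma^2+\sin^2\theta)/(n-2)$.

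For the last step, write $y_i=\cos\theta\,z_i+\sin\theta\,w_i+\epsilon_i$, where $z_i=\betasrc^T\xb_i$ and $w_i=\ub^T\xb_i$ are independent $\mathcal N(0,1)$ (isotropy of $\xb_i$ plus $\betasrc\perp\ub$) and $\epsilon_i\sim\mathcal N(0,\sigma^2)$ independently; then
\[
\hat a-\cos\theta=\frac{\sin\theta\sum_i z_i w_i+\sum_i z_i\epsilon_i}{\sum_i z_i^2}.
\]
Conditioning on $\zb$, the numerator is a centered Gaussian with variance $(\sin^2\theta+\sigma^2)\normsq{\zb}{2}$, so $\EE[(\hat a-\cos\theta)^2\mid\zb]=(\sin^2\theta+\sigma^2)/\normsq{\zb}{2}$; and $\normsq{\zb}{2}=\sum_i(\betasrc^T\xb_i)^2\sim\chi^2_n$ with $\EE[1/\chi^2_n]=1/(n-2)$ for $n>2$. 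Combining gives $\EE_{\mathcal D}\mathcal{R}_{\mathrm{lt}}=\sin^2\theta+(\sigma^2+\sin^2\theta)/(n-2)$.

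The main obstacle is making the first step rigorous rather than the probability calculation: Theorem~\ref{sparsification} supplies the features only in the iterated limit $\alpha\to 0$, $t\to\infty$, so one must justify substituting the limiting rank-one feature matrix directly into (\ref{linear_transfer}) --- i.e.\ that the risk of linear transfer is continuous under this limit. A minor secondary check is that $\vb_{L-1}\neq 0$ and $\zb\neq 0$ almost surely, so that $\hat a$ and the predictor $f(\xb)=\hat a\,\betasrc^T\xb$ are well defined.
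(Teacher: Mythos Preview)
Your proposal is correct and follows essentially the same route as the paper: collapse to a rank-one feature matrix via Theorem~\ref{sparsification}, reduce to a scalar regression with coefficient $\hat a=\zb^T\yb_{\mathrm t}/\normsq{\zb}{2}$, decompose $\betatar=\cos\theta\,\betasrc+\sin\theta\,\ub$, and finish with the inverse chi-squared moment $\EE[1/\normsq{\zb}{2}]=1/(n-2)$. The only cosmetic differences are that the paper expands the risk as $1+\EE b^2-2\cos\theta\,\EE b$ rather than $(\hat a-\cos\theta)^2+\sin^2\theta$, and computes the inverse chi-squared expectation by an explicit spherical integral rather than citing the known moment; your caveat about the $\alpha\to 0$ limit is also not addressed in the paper's own proof.
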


We prove Theorem \ref{lt_ge} in Appendix \ref{sec:lt_ge_proof}. The structure of the result in Theorem \ref{lt_ge} merits some discussion. After pretraining in the feature learning regime $\alpha \to 0$, the feature space of the network has sparsified so that it can only express functions along $\betasrc$ (Theorem \ref{sparsification}). Since the features of the network cannot change in linear transfer, the main contribution to the generalization error is $\sin^2\theta$, which can be viewed as the norm of the projection of the target function into the space orthogonal to the features spanned by the pretrained network. This is an irreducible error that is the best case risk given that the features cannot change. The second term comes from the finiteness of the training set. Since linear transfer learns from a finite sample of training points,
minimizing the training error can overfit the noise and the learned function distorts away from the ground truth. 
Luckily, since the pretrained feature space has sparsified, the effect of finite sampling and additive label noise decays as $\sim 1/n$, effectively filtering out the $d$-dimensional noise by projecting it onto a single vector. Compare this to the generalization of the scratch trained network (\ref{scratch_ge_exp}). There, the features of the equivalent linear regression problem, $\Xb$, have support over all $d$-dimensions, so there is no irreducible error term. The expressivity, however, comes at a cost. Each dimension of the regression vector is vulnerable to noise in the training data, and the projection of the target function onto the feature space is strongly distorted due to finite sampling (i.e. $\sim \gamma$). We can precisely analyze this trade off by comparing (\ref{scratch_ge_exp}) and (\ref{lt_ge_exp}). In the limit $n,d \to \infty$, the transferability (\ref{transferability}) is 
\begin{equation}\label{linear_transferability}
    \mathcal{T}_{\mathrm{lt}} =
    \begin{cases}
        \frac{(1-\gamma)^2 + \gamma \sigma^2}{1-\gamma} - \sin^2\theta & \gamma < 1 \\
        \frac{\sigma^2}{\gamma -1} - \sin^2\theta & \gamma > 1 
    \end{cases}
\end{equation}
which is plotted in Fig. \ref{fig:linear_transfer}(a). From (\ref{linear_transferability}) we can identify the regions of negative transfer for this model, which are shaded in red in Fig. \ref{fig:linear_transfer}(b). In the underparameterized regime ($\gamma > 1$), there is negative transfer for all $\gamma - 1 > \frac{\sigma^2}{\sin^2\theta}$. 
In words, at fixed $\gamma$ and $\sigma$, i.e., fixing the number of data points and label noise, transfer efficiency degrades as the norm of the out-of-subspace component increases. 

In the overparameterized regime ($\gamma < 1$), negative transfer only occurs when $\sigma < 1$.This can be viewed as a condition on the signal-to-noise ratio of the target data: $\textrm{SNR} = \normsq{\betatar}{2}/ \sigma^2 = 1/\sigma^2$. When $\textrm{SNR} < 1$, scratch training can never recover the underlying vector $\betatar$ and pretraining is always beneficial. When $\textrm{SNR} > 1$, negative transfer occurs when $\theta \in (\arccos(1-\sigma), \pi/2)$ and $\gamma \in (\gamma_+, \gamma_-)$ where $\gamma_{\pm} = \frac{1}{2}[(1 + \cos^2\theta - \sigma^2) \pm \sqrt{(1 + \cos^2\theta - \sigma^2)^2 -4\cos^2\theta}]$. In the noiseless case $\sigma \to 0$, this expression simplifies to $\theta \in (0, \pi/2)$, $\cos^2\theta < \gamma < 1$ (see Appendix \ref{sec:plots} Fig. \ref{fig:noiseless}). We can view $\gamma$ as a dimensionless measure of the amount of target data, and $\cos^2 \theta$ as the amount of power the target function has in the subspace of the pretraining task. The condition for negative transfer is satisfied when there is more target data than there is power in the pretrained subspace. As $\sigma$ increases, the region of negative transfer shrinks, since the noise corrupts the scratch trained accuracy. Finally we mention that the two regions of negative transfer in Fig. \ref{fig:linear_transfer} are separated by positive transfer that persists even when $\theta = \pi/2$. We dub this effect \textit{anomalous positive transfer}, since the pretrained features are completely orthogonal to those in the target, yet transferability is still positive. In this regime, transfer is positive soley because of the disproportionately large amount of data in the source task, not because pretraining learned useful features for the downstream task. By comparing the transferred model to a regularized scratch-trained model, we can eliminate this effect, which we show in simulation in Appendix \ref{sec:plots} Fig. \ref{fig:scratch_reg_phase}. In Appendix \ref{sec:plots} Fig. \ref{fig:metrics} we demonstrate that distributional discrepency measures are indeed misleading: neither $\KL$, nor $W_1$ are negatively correlated with increased transferability. 

\subsubsection{Ridge regularization cannot fix negative transfer}
In the previous section, the network sparsified to features that incompletely described the target function, leading to negative transfer given sufficient target data. A common approach to mitigate this kind of multicollinearity in linear regression is to add an $\ell_2$ penalty to the regression objective (\ref{linear_transfer}) so that 
\begin{equation}\label{optim_ridge}
    \hat{\Wb}_L = \argmin_{\hat{\Wb}_L \in \mathbb{R}^d} \frac{1}{2n}\normsq{\Phib\Wb_L - \yb_{\mathrm{t}}}{2} + \frac{\lambda}{2} \normsq{\Wb_L}{2}.
\end{equation}
In the following theorem, which we prove in Appendix \ref{sec:ridge_ge_proof}, we show that the generalization error for regularized linear transfer is a strictly increasing function of the ridge parameter $\lambda$, leading to a larger region of negative transfer for any $\lambda > 0$ (Fig. \ref{fig:ridge_transfer}).


\begin{theorem}\label{ridge_ge}
Under Assumptions~\ref{data_assump} and \ref{func_assump}, and assuming the source-target overlap is $\theta,$ the expected generalization error of the ridge linear transfer model over the training data is 
    \begin{equation}
   \lim_{n\to \infty} \mathbb{E}_{\mathcal{D}} \mathcal{R}_{\mathrm{lt}}^\lambda = 1 - \frac{(1+ 2\lambda)}{(1+\lambda)^2}\cos^2\theta
\end{equation}
\end{theorem}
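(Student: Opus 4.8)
The plan is to use the feature sparsification of Theorem~\ref{sparsification} to collapse the ridge problem~(\ref{optim_ridge}) to a one-dimensional least-squares problem, solve it in closed form, and then pass to the $n\to\infty$ limit by the law of large numbers. Concretely, by Theorem~\ref{sparsification} pretraining in the regime $\alpha\to0$ produces $\Wb_1\Wb_2\cdots\Wb_{L-1}=\betasrc\vb_{L-1}^T$. In the $\alpha\to0$ limit the gradient flow is balanced, so the quantities $\D{l}$ vanish for all $l$ and all layers share the same singular values; combined with Lemma~\ref{linear_pop_func} (the end-to-end map converges to $\betasrc$, which has unit norm by Assumption~\ref{func_assump}), this forces each $\Wb_l$ to be rank one with unit singular value, and hence $\norm{\vb_{L-1}}{2}=1$. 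Therefore the design matrix in~(\ref{optim_ridge}) is the rank-one matrix $\Phib=\Xb\betasrc\vb_{L-1}^T=\ub\vb_{L-1}^T$, where $\ub:=\Xb\betasrc\in\RR^n$ collects the projections of the target inputs onto the source direction.

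\emph{Solving the ridge regression.} The minimiser of~(\ref{optim_ridge}) is $\hat{\Wb}_L=\bigl(\tfrac1n\Phib^T\Phib+\lambda\Ib\bigr)^{-1}\tfrac1n\Phib^T\yb_{\mathrm{t}}$. Since $\Phib^T\Phib=\normsq{\ub}{2}\,\vb_{L-1}\vb_{L-1}^T$ and $\Phib^T\yb_{\mathrm{t}}=(\ub^T\yb_{\mathrm{t}})\,\vb_{L-1}$ both lie along $\vb_{L-1}$, inverting on this line (using $\normsq{\vb_{L-1}}{2}=1$) gives $\hat{\Wb}_L=\hat c\,\vb_{L-1}$ with the scalar
\[
\hat c=\frac{\ub^T\yb_{\mathrm{t}}/n}{\normsq{\ub}{2}/n+\lambda},
\]
so the transferred network implements the linear predictor $\bm{\beta}_{\mathrm{lt}}=\Wb_1\cdots\Wb_{L-1}\hat{\Wb}_L=\hat c\,\betasrc$.

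\emph{Taking $n\to\infty$ and evaluating the risk.} Under Assumption~\ref{data_assump}, the entries $\ub_i=\betasrc^T\xb_i$ are i.i.d.\ $\mathcal N(0,1)$ and $\yb_{\mathrm{t},i}=\betatar^T\xb_i+\epsilon_i$, so by the strong law $\normsq{\ub}{2}/n\to1$ and $\ub^T\yb_{\mathrm{t}}/n\to\betasrc^T\betatar=\cos\theta$ almost surely (the noise cross-term has variance $\sigma^2/n\to0$), whence $\hat c\to\cos\theta/(1+\lambda)$. Because $p(\xb)=\mathcal N(0,\Ib)$, the population risk of any linear predictor $\bm{\beta}$ equals $\normsq{\bm{\beta}-\betatar}{2}$, so
\[
\mathcal R_{\mathrm{lt}}^\lambda\longrightarrow\normsq{\tfrac{\cos\theta}{1+\lambda}\betasrc-\betatar}{2}=\frac{\cos^2\theta}{(1+\lambda)^2}-\frac{2\cos^2\theta}{1+\lambda}+1=1-\frac{1+2\lambda}{(1+\lambda)^2}\cos^2\theta,
\]
using $\normsq{\betasrc}{2}=\normsq{\betatar}{2}=1$ and $\betasrc^T\betatar=\cos\theta$. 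To upgrade this to the claimed statement about $\lim_{n\to\infty}\mathbb{E}_{\mathcal D}\mathcal R_{\mathrm{lt}}^\lambda$, I would note that for $\lambda>0$ the denominator of $\hat c$ is bounded below by $\lambda$, so $\hat c$ (hence $\mathcal R_{\mathrm{lt}}^\lambda$) has second moment uniformly bounded in $n$ and dominated convergence permits exchanging $\mathbb{E}_{\mathcal D}$ with the limit; alternatively one can carry out the finite-$n$ expectation as in the proof of Theorem~\ref{lt_ge} and then send $n\to\infty$. The monotonicity remark then follows from $\frac{d}{d\lambda}\tfrac{1+2\lambda}{(1+\lambda)^2}=\tfrac{-2\lambda}{(1+\lambda)^3}<0$ for $\lambda>0$, so $\mathcal R_{\mathrm{lt}}^\lambda$ strictly increases in $\lambda$ and $\lambda_*=0$ is optimal.

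\emph{Main obstacle.} Essentially all the content sits in the reduction step: once sparsification collapses the feature matrix to rank one, everything reduces to a scalar computation and none of the $\gamma=n/d$ random-matrix machinery used elsewhere in the paper is needed. The two points requiring care are (i) the normalisation $\norm{\vb_{L-1}}{2}=1$, an $O(1)$ constant that would merely rescale $\lambda$ if mishandled, and (ii) the interchange of $\mathbb{E}_{\mathcal D}$ and $\lim_n$, which is exactly where the hypothesis $\lambda>0$ is genuinely used — at $\lambda=0$ the $O(1/n)$ correction appearing in Theorem~\ref{lt_ge} is precisely the contribution of the fluctuations of $\normsq{\ub}{2}/n$ that the ridge term suppresses.
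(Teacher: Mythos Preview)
Your argument is correct and takes a genuinely more elementary route than the paper. Both proofs begin identically, using the sparsification of Theorem~\ref{sparsification} to reduce the ridge problem to a scalar estimator $\hat c=\frac{(\Xb\betasrc)^T\yb_{\mathrm t}}{\normsq{\Xb\betasrc}{2}+n\lambda}$ and writing the risk as $1+\hat c^2-2\hat c\cos\theta$. From there the paper computes the finite-$n$ expectations $\EE[\hat c]$ and $\EE[\hat c^2]$ exactly as one-dimensional integrals $I_1(m,\lambda)$, $I_2(m,\lambda)$ over the chi-type density of $\normsq{\Xb\betasrc}{2}$, and then evaluates each integral asymptotically via the Laplace (saddle-point) method before assembling the limit. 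You instead observe that $\normsq{\Xb\betasrc}{2}/n\to1$ and $(\Xb\betasrc)^T\yb_{\mathrm t}/n\to\cos\theta$ almost surely by the SLLN, conclude $\hat c\to\cos\theta/(1+\lambda)$, and justify the exchange of limit and expectation via uniform integrability (the denominator bounded below by $n\lambda$ for $\lambda>0$). Your route is shorter and avoids the asymptotic-analysis machinery entirely; the paper's route, while heavier, has the advantage of producing the finite-$n$ expressions en route (paralleling the computation in Theorem~\ref{lt_ge}) and making the $\lambda\to0$ matching with~(\ref{lt_ge_exp}) transparent without a separate argument. Your remark that the normalisation $\norm{\vb_{L-1}}{2}=1$ is needed is well taken---the paper uses it implicitly in passing from $(\Phib^T\Phib+n\lambda\Ib)^{-1}\vb_{L-1}$ to $(\normsq{\Xb\betasrc}{2}+n\lambda)^{-1}\vb_{L-1}$.
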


Ridge regression attenuates the power of the predictor in all directions of the data, including the direction parallel to the signal.
Due to sparisification of Theorem~\ref{sparsification}, $\ell_2$ regularization is non-optimal and hence regularization impairs generalization by attenuating useful features, i.e., those with $\theta < \pi/2$. 

\subsection{Fine-tuning}
Another common transfer learning strategy is \textit{fine-tuning}, in which all model parameters are trained on the target task from the pre-trained initialization. For general nonlinear models, analyzing the limit points of gradient flow from arbitrary initialization is a notoriously difficult task. For the deep linear model however, we can solve for the expected generalization error of fine-tuning exactly. 

\begin{theorem}\label{ft_ge}
    Under Assumptions~\ref{data_assump} and \ref{func_assump}, and assuming the source-target overlap is $\theta$, the expected generalization error of the fine-tuned model over the training data is:
    \begin{equation}
        \EE_{\mathcal{D}} \mathcal{R}_{\mathrm{ft}} =
        \begin{cases}
        \EE_{\mathcal{D}} \mathcal{R}_{\mathrm{sc}} + (1-\gamma)(1-2\cos\theta) & \gamma \le 1 \\
        \EE_{\mathcal{D}} \mathcal{R}_{\mathrm{sc}} & \gamma > 1
        \end{cases}
    \end{equation}
    where $\EE_{\mathcal{D}} \mathcal{R}_{\mathrm{sc}}$ is the expected generalization error of the scratch trained model
\end{theorem}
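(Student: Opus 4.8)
The plan is to characterize the limit point $\bm{\beta}_{\mathrm{ft}}$ of gradient flow on the empirical loss when initialized at the pretrained weights, and then compute the resulting population risk. The key structural fact is that fine-tuning a deep linear network from a \emph{nonzero} initialization does not recover the minimum-norm interpolant; instead, gradient flow on $\mathcal{L}_{\mathrm{t}}$ conserves the layer-wise imbalances $\D{l}$ (the standard ``balancedness'' invariants of deep linear gradient flow), which at the pretrained initialization are fixed by Theorem~\ref{sparsification}. After pretraining in the $\alpha\to 0$ limit, $\Phib = \betasrc \vb_{L-1}^T$, so $\bm{\beta}_{\mathrm{pre}} = \betasrc$ and the hidden representation is rank one aligned with $\betasrc$. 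I would first argue that, because the product map factors through this rank-one bottleneck, fine-tuning restricts the reachable set: writing $\bm{\beta}_{\mathrm{ft}} = \betasrc \,(\text{something})$ plus a component that can grow in the row space of $\Xb$, the network can only move $\bm{\beta}$ within a low-dimensional affine subspace determined by $\betasrc$ and $\row(\Xb)$. Concretely, I expect $\bm{\beta}_{\mathrm{ft}} = \Xb^{+}\yb_{\mathrm{t}} + (\mathbf{I} - \Xb^{+}\Xb)\,\Pi\,\betasrc$ for a scalar-type projection $\Pi$ onto the span of $\betasrc$ within $\ker(\Xb)$ — i.e., the min-norm solution plus a ``carried over'' component of the source signal in the nullspace of the target data.

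Next I would decompose the risk. Since $\mathcal{R} = \normsq{\bm{\beta}_{\mathrm{ft}} - \betatar}{2}$ under the Gaussian isotropic input assumption (Assumption~\ref{data_assump}, with $p(\xb)$ standard normal so the population Gram is the identity), and $\bm{\beta}_{\mathrm{ft}}$ decomposes orthogonally into its component in $\row(\Xb)$ (which equals the row-space component of $\Xb^{+}\yb_{\mathrm{t}}$, identical to the scratch case) and its component in $\ker(\Xb)$ (which for scratch is zero but for fine-tuning is the carried-over piece of $\betasrc$), the two risks differ only through the nullspace term. I would write
\[
\mathcal{R}_{\mathrm{ft}} - \mathcal{R}_{\mathrm{sc}} = \normsq{P_{\ker}\bm{\beta}_{\mathrm{ft}} - P_{\ker}\betatar}{2} - \normsq{P_{\ker}\betatar}{2},
\]
where $P_{\ker} = \mathbf{I} - \Xb^{+}\Xb$ projects onto the $(d-n)$-dimensional nullspace (for $\gamma<1$). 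Using Assumption~\ref{func_assump} ($\normsq{\betasrc}{2}=\normsq{\betatar}{2}=1$, $\betasrc^T\betatar=\cos\theta$) and the fact that $P_{\ker}$ is a random projection independent of $\betasrc,\betatar$ of rank $d-n = d(1-\gamma)$, the expectations $\EE\,\normsq{P_{\ker}\betasrc}{2} = 1-\gamma$, $\EE\,(P_{\ker}\betasrc)^T(P_{\ker}\betatar) = (1-\gamma)\cos\theta$, and $\EE\,\normsq{P_{\ker}\betatar}{2} = 1-\gamma$ should give exactly $(1-\gamma)(1 - 2\cos\theta)$ after expanding the square, matching the claim; for $\gamma>1$ the nullspace is trivial, $P_{\ker}=0$, and the carried-over component vanishes, so $\mathcal{R}_{\mathrm{ft}} = \mathcal{R}_{\mathrm{sc}}$.

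The main obstacle is the first step: rigorously pinning down the limit point of gradient flow on the \emph{empirical} loss starting from the pretrained initialization, and in particular showing that the nullspace component of $\bm{\beta}$ is preserved \emph{exactly} as $P_{\ker}\betasrc$ (not some attenuated or rotated version). This requires combining the conservation laws of deep linear gradient flow with the $\alpha\to 0$ pretrained configuration of Theorem~\ref{sparsification}, and carefully taking the order of limits ($\alpha\to 0$ then $t\to\infty$) so that the rank-one pretrained structure is exactly inherited; I would likely adapt the argument of \citet{yun2021unifyingviewimplicitbias} used for Theorem~\ref{linear_emp_func}, but tracking the nonzero initial condition rather than the vanishing one. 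A secondary technical point is justifying that the row-space component of $\bm{\beta}_{\mathrm{ft}}$ coincides with that of the min-norm interpolant $\Xb^{+}\yb_{\mathrm{t}}$ — this should follow because interpolation forces $\Xb\bm{\beta}_{\mathrm{ft}} = \yb_{\mathrm{t}}$, which fixes the row-space component uniquely regardless of initialization. The remaining random-matrix expectations reuse the machinery already invoked for Theorem~\ref{scratch_ge}.
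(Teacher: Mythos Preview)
Your proposal is correct and follows essentially the same route as the paper: establish $\bm{\beta}_{\mathrm{ft}} = \Xb^{+}\yb_{\mathrm{t}} + (\Ib - \Pb_{\row(\Xb)})\betasrc$ by observing that the invariants $\D{l}$ are carried over from pretraining (so the rank-one structure and the argument of Theorem~\ref{linear_emp_func} still apply), with the only change being that the frozen nullspace component of $\Wb_1$ is now $P_{\ker}\betasrc$ rather than $\mathcal{O}(\alpha)$; then expand the risk and evaluate the random-projection expectations exactly as you outline. Your hedging with the operator $\Pi$ is unnecessary---the paper shows it is simply the identity, i.e., the nullspace component of $\bm{\beta}$ is preserved as $P_{\ker}\betasrc$ exactly---and your subsequent computation already assumes this, yielding $(1-\gamma)(1-2\cos\theta)$ on the nose.
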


Theorem \ref{ft_ge} is proven in Appendix \ref{sec:fine_tuning_proof}. Theorem \ref{ft_ge} yields an expression for the fine-tuning transferability, which is plotted in Fig. \ref{fig:fine_tuning}(a): 
\begin{equation}\label{ft_transferability}
    \mathcal{T}_{\mathrm{ft}} = 
    \begin{cases}
        (\gamma - 1)(1-2\cos\theta) & \gamma \le 1 \\
        0 & \gamma > 1
    \end{cases}
\end{equation}
\begin{figure*}[h]
    \centering
    \includegraphics[width=1\linewidth]{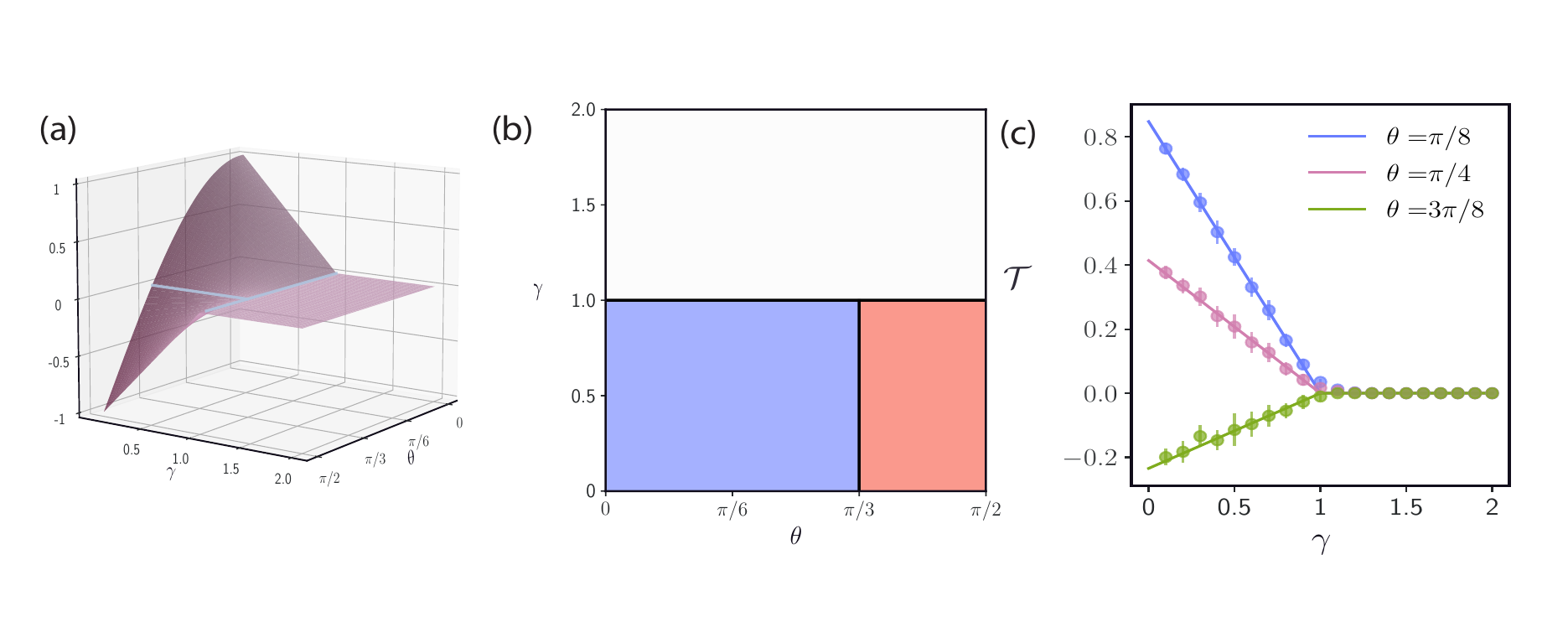}
    \caption{\textbf{Fine-tuning transferability surface} Using the same transfer setup as in Fig. \ref{fig:linear_transfer} we fine tune all of the weights on the target dataset starting from the pretrained weight initialization. \textbf{(a)} The theoretical transferability surface (\ref{ft_transferability}) as a function of target dataset size $\gamma$ and task overlap $\theta$. The light blue line parallel to the $\gamma$ axis indicates the boundary between positive and negative transfer, while the one parallel to the $\theta$ axis indicates the boundary for zero transferability. \textbf{(b)} Top-down view of Fig. \ref{fig:fine_tuning}(a) shaded by sign of transferability. Red region indicates negative transfer $\mathcal{T} < 0$, blue region indicates positive transfer $\mathcal{T} > 0$. The white region indicates no transfer benefit $\mathcal{T} = 0$. \textbf{(c)} Slices of the transferability surface (\ref{ft_transferability}) for constant $\theta$. Solid lines represent theoretical values, circles are points from experiments. Error bars represent the standard deviation over $20$ draws of the target set.}
    \label{fig:fine_tuning}
\end{figure*}

When the model is underparameterized $\gamma > 1$, there is a unique global minimum in the space of $\bm{\beta} = \Wb_1 \Wb_2 \cdots \Wb_L$. Since gradient flow converges to a global minimum, (Theorem \ref{linear_emp_func}), fine tuning loses the memory of the pretrained initialization leading to zero transferability (white region in Fig. \ref{fig:fine_tuning}(b)). When the network is overparameterized, however, there is a subspace of global minima. We show in the \Secref{sec:fine_tuning_proof} that the pretrained initialization induces an implicit bias of gradient flow away from the minimum norm solution. When the target vector has the majority of its norm along the pretraining subspace, i.e. $\cos\theta > 1/2$, the pretrained features are beneficial, leading to positive transfer. For $\cos\theta < 1/2$, however, the pretrained features bias the network too strongly toward the source task, leading to negative transfer. 

\section{Student-teacher ReLU networks}
\begin{figure*}[h]
    \centering
    \includegraphics[width=1\linewidth]{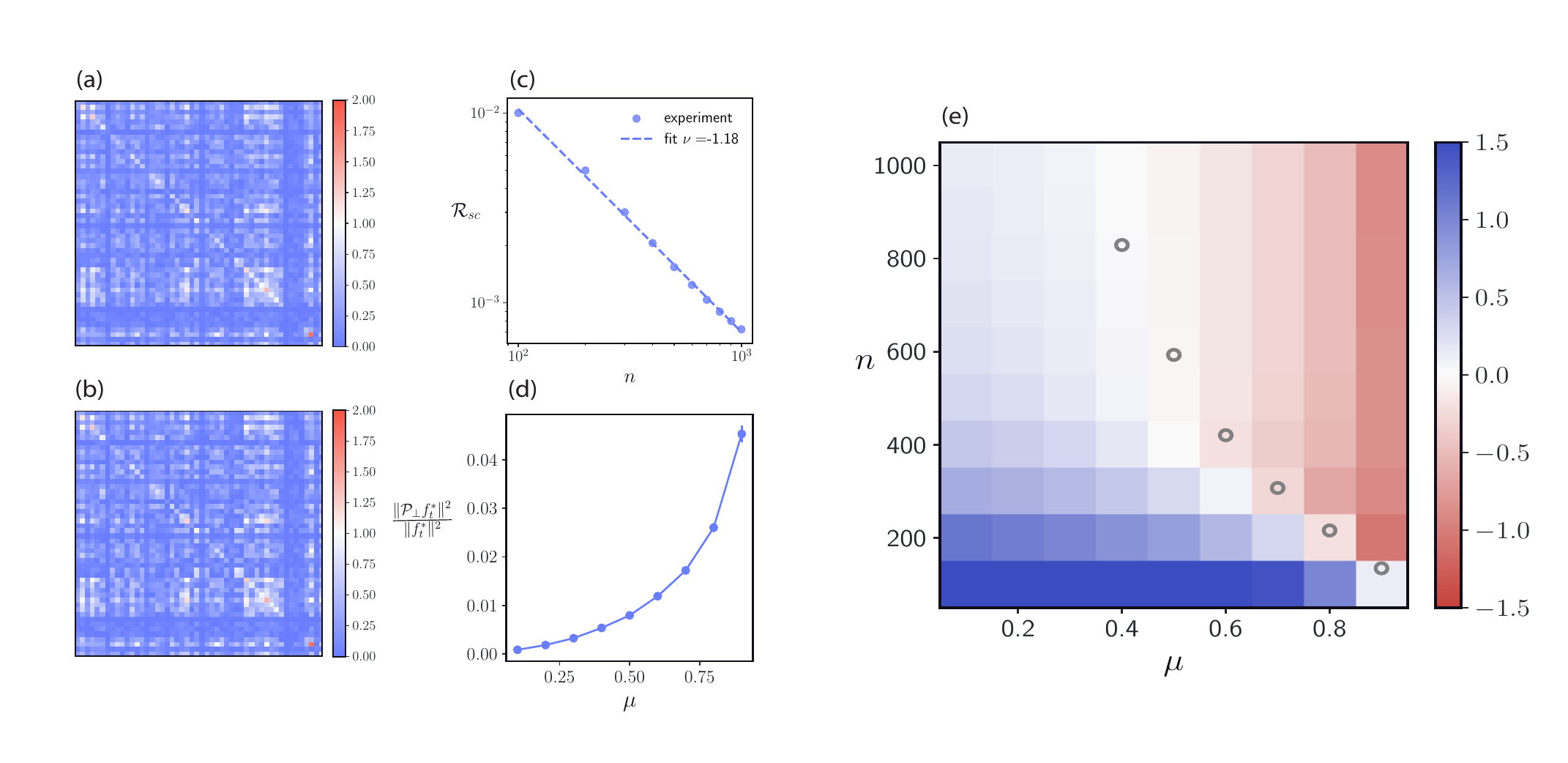}
    \caption{\textbf{Linear transfer in two-layer ReLU networks} We train a two layer ReLU network with $m = 1000$ neurons on a teacher with $m_* = 100$ neurons and $d=100$ dimensional gaussian data, according to the ablated transfer setup (\ref{relu_src}), (\ref{relu_tar}). For these experiments, we set the label noise $\sigma = 0$. \textbf{(a)} Gram matrix from the kernel of the pretrained model \textbf{(b)} Gram matrix from the kernel of the ground truth source function $\fs$. The two gram matrices are nearly indistinguishable suggesting that the kernel sparsifies to the represent features in the source task. \textbf{(c)} Generalization error of the scratch trained model as a function of dataset size $n$, fit to a power law \textbf{(d)} Norm of out-of-RKHS component of target function $\normsq{P^\perp \ft}{L_2}$, normalized by target function norm $\normsq{\ft}{L_2}$ as a function of excess target features $\mu$. \textbf{(e)} Heat map of transferability as a function of excess target features $\mu$ and dataset size $n$. We normalize the transferability by variance in the target data. Gray circles represent the point of negative transfer predicted by our theory. Results are averaged over $100$ realizations of the data and $10$ realizations of random draws of the teacher.}
    \label{fig:relu}
\end{figure*}

In the following, we demonstrate that many of the results from our analytically solvable model also hold, qualitatively, in the more complicated setting of linear transfer with nonlinear networks. In particular, we choose a model of the form $f(\xb) = \frac{1}{m} \sum_{i = 1}^m c_i \sigma (\wb_i^T \xb)$
where $\sigma(y) = \max\{0, y\}$ is the ReLU activation. We scale the model by $1/m$ to place the network in the mean field, feature learning regime \citep{chizat2019lazy, mei2018mean, rotskoff2022trainability}. As in the deep linear model, we choose source and target functions that are representable by the network. That is, we study this model in the student teacher setting. To vary the level of feature space overlap between the source and target functions, we define a network of $m_*$ neurons for the target task, and generate the source network by ablating a fraction $\mu$ of the hidden neurons form the target. More precisely, let $\mathcal{A}$ be a uniformly random subset of the index set $\{1, 2, \cdots m_*\}$ with $\vert \mathcal{A}\vert = \mu m_*$. Then 
\begin{align}
    \label{relu_src}
    f_\textrm{s}^*(\xb) &= \frac{1}{(1-\mu)m_*}\sum_{i \in \mathcal{A}^c}c^*_i \sigma (\wb_i^{*T} \xb) \\
    \label{relu_tar}
    f_\textrm{t}^*(\xb) &= \frac{1}{m_*}\sum_{i =1}^{m_*} c^*_i \sigma (\wb_i^{*T} \xb) 
\end{align}

Thus the source has $\mu m^*$ {\it fewer} hidden features than the target task, and so the fraction $\mu$ controls the degree of discrepancy between source and target feature spaces. In essence when $\mu=0$ the source and target spaces are identical. However, as $\mu$ increases, an increasing fraction of new target features, that were not present in pre-training, must be learned.  We constrain the hidden features in the model, source, and target to the $d$-dimensional unit sphere $\wb_i, \wb_i^* \in \mathbb{S}^{d-1}$. As in the deep linear model, we choose $\xb \sim \mathcal{N}(0,\Ib_d)$, train the source task on the population loss, which can be computed exactly for this model, and the target task on a finite sample of $n$ data points. 

Previous work \citep{rotskoff2022trainability, mei2018mean, chizat2020sparse} has shown that in the overparameterized setting $m \gg m_*$, gradient flow will converge to a global minimizer of the population loss, so that $\lim_{m \to \infty}\lim_{t\to \infty}f(\xb) = f_\textrm{s}^*(\xb)$, which establishes that the trained network builds a representation of $f_\textrm{s}^*(\xb)$ in the mean field limit. This does not necessarily mean that all of the hidden neurons of the model converge to those of the teacher, since any superfluous weight directions can be eliminated by setting the corresponding output weight to zero. However, we demonstrate empirically in Fig. ~\ref{fig:relu}(a)-(b) that this relationship is preserved at the level of the model's kernel, so that $k(\xb, \xb') = \frac{1}{m}\sum_{i=1}^m \sigma(\wb^T_i \xb) \sigma(\wb^T_i \xb') \approx \frac{1}{(1-\mu)m_*}\sum_{i \in \mathcal{A}^c} \sigma(\wb^{*T}_i \xb) \sigma(\wb^{*T}_i \xb')$. This observation is analogous to Theorem \ref{sparsification}: training in the feature learning regime causes the model's features to sparsify to those present in the target function. 

Now, linear transfer in this model can be formulated as a kernel interpolation problem with this kernel. The generalization error of kernel interpolation can be separated into an $n$-dependent component, and an irreducible error term which corresponds to the norm of the projection of the target function into the subspace of $L_2(p)$ orthogonal to the RKHS defined by the kernel:
\begin{equation}
    \EE_D \mathcal{R}_{\textrm{lt}} = C(n) + \normsq{P^\perp \ft}{L_2}.
\end{equation}
As expected, the norm of this projection increases monotonically with $\mu$ as shown in Fig. \ref{fig:relu}(d). We show how to compute this projection in Appendix \ref{sec:relu_appendix}. In the deep linear setting, $\normsq{P^\perp \ft}{L_2} = \sin^2\theta$, and $C(n) \sim 1/n$. While the asymptotic, typical generalization error of kernel regression has been studied in \citep{canatar2021spectral}, for the purposes of estimating the generalization error of the transferred model, we assume here that this generalization error is dominated by this irreducible term for the large $n$ target dataset sizes we consider, just as we showed for the deep linear model.

However, an expression for the generalization error of the scratch-trained model is also needed to derive the transferability. We are not aware of a theory of generalization error for infinite width nonlinear networks trained on a finite data in the mean field regime. Intriguingly, however, we demonstrate empirically (Fig. \ref{fig:relu}(c)) that the generalization error obeys a power law $\mathcal{R}_{\textrm{sc}} \sim A n^{-\nu}$ with $\nu = 1.18$. By setting our theoretically predicted generalization error of our transferred model $\normsq{P^\perp \ft}{L_2}$ equal to the empirically observed scaling law $A n^{-\nu}$ for our scratch-trained model,  we can approximately identify the point of negative transfer in $n$ for any given $\mu$ (gray circles in Fig. \ref{fig:relu}(e)). It is clear from Fig. \ref{fig:relu}(e) that this heuristic for finding the boundary between positive and negative transfer becomes more accurate as the number of target points becomes large, since the $n$-dependent component of the kernel regression generalization error goes to zero in this limit. The phase diagram in Fig. \ref{fig:relu} for noiseless ReLU networks resembles the phase diagram for linear transfer with deep linear networks in the noiseless setting with $\sigma = 0$ (Appendix \ref{sec:plots} Fig.\ref{fig:noiseless}. \ref{fig:noiseless_shaded}). Overall, this demonstrates that we are able to predict the phase boundary between positive and negative transfer in the ReLU case, using our conceptual understanding in the deep linear case. 

\section{Conclusion}
In this paper, we highlight the importance of thinking about transfer learning in the context of the feature space of the pretrained model. We rigorously identify the number of data points necessary for transfer learning to outperform scratch training as a function of feature space overlap in deep linear networks. We also demonstrate that our understanding of linear transfer carries over to shallow nonlinear networks as well. One of our primary findings is that transferability is inherited from the learned features of the pretraining task. In the rich training regime, this can lead to an inability for the pretrained model to transfer to tasks outside the source feature space. On the other hand, a model trained in the lazy regime is unlikely to outperfrom scratch training, since features are not updated in this limit. This suggests that models trained somewhere along the lazy-to-rich hierarchy may be more flexible in their transfer capabilities. In Appendix \ref{sec:plots} Fig. \ref{fig:source_reg} we generate a sweep of nonlinear models trained with varying degrees of feature learning on the source task and show that we can eliminate negative transfer if the pretrained model lies optimally between the lazy and rich regimes. These experiments demonstrate that regularizing pretrained models to avoid feature sparsification in the source task is a promising direction for improving transfer learning capabilities. With this work, we hope to advance the idea that transfer learning performance should be understood through the learned feature space of the pretrained model and not as a property of the dataset alone.  

\section*{Impact Statement}
This work aims to understand the theoretical aspects of transfer learning, and advances the idea that the feature space of the pretrained model determines the capability of the model to transfer to downstream tasks. This theoretical understanding could be used to create data and compute efficient pretraining procedures that maximize transferability to downstream tasks of interest.   

\bibliography{bibliography}
\bibliographystyle{icml2025}

\newpage
\appendix
\onecolumn

\section{Dataset similarity is not predictive of transfer efficiency}
\label{sec:dataset_similarity}
The common wisdom in transfer learning is that related tasks should transfer effectively to one another. 
While it may seem that closeness of task distributions should correlate with transfer performance, we show that this is not necessarily the case. In particular, we select a member of each family and prove that, within our model, one can achieve positive transfer ($\mathcal{T}>0$) with distributions that are arbitrarily far apart. Two functions representable with the same features can be ``far apart''.
We formalize this notion with the following theorem. 
\begin{assumption}\label{fL2}
    We assume $f\in L_2(\RR^d, p)$ and for each $\xb\in\RR^d$ we define the random variable $y:\RR^d\to \RR$ through the relation $ y = f(\xb) + \epsilon$ with $\epsilon \sim \mathcal{N}(0, \sigma^2)$.  Let $p_f(\xb,y)$ denote the joint probability density of $\xb$ and $y$. We assume $\Phi \subset L_2(p)$ is a linear subspace with orthonormal basis $\{\phi_i\}_{i=1}^M$ and $M$ may be infinite.
\end{assumption}

\begin{theorem}\label{dudley}
    Assume~\ref{fL2}. Then for any $f \in \Phi$, and any $\delta > 0$ there exists $g \in \Phi$ such that 
    \[
    \gamma_\beta (p_f, p_g) \ge \delta
    \]
    where $\gamma_\beta(p,p')$ is the Dudley Metric. Similarly, for any $f \in \Phi$, and any $\delta > 0$ there exists $g \in \Phi$ such that 
    \[
    \KL(p_f \Vert p_g) \ge \delta
    \]
    where $\KL(p_f \Vert p_g)$ is the Kullback Leibler divergence.
\end{theorem}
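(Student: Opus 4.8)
A nontrivial linear subspace is unbounded, so given $f\in\Phi$ we may perturb it \emph{within} $\Phi$ by an arbitrarily large multiple of a basis vector. I would take $g = g_t \defeq f + t\,\phi_1\in\Phi$ and send $t\to\infty$. The joint laws $p_f$ and $p_g$ share the input marginal $p(\xb)$ and differ only in the conditional law of $y$ given $\xb$, namely $\mathcal N(f(\xb),\sigma^2)$ versus $\mathcal N(g_t(\xb),\sigma^2)$; this is what makes both discrepancies explicit.

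\textbf{The Kullback--Leibler bound.} Conditioning on $\xb$ (the $\xb$-marginals being identical) and using the closed form for the KL divergence between equal-variance Gaussians,
\[
\KL(p_f\Vert p_{g_t}) \;=\; \EE_{\xb\sim p}\!\left[\frac{(f(\xb)-g_t(\xb))^2}{2\sigma^2}\right] \;=\; \frac{\norm{f-g_t}{L_2(p)}^2}{2\sigma^2} \;=\; \frac{t^2}{2\sigma^2},
\]
the last equality from $\norm{\phi_1}{L_2(p)}=1$. This exceeds $\delta$ once $t\ge\sigma\sqrt{2\delta}$, so the KL half is a one-line computation once the conditioning is set up.

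\textbf{The Dudley bound.} Since $\gamma_\beta$ is the integral probability metric generated by bounded $1$-Lipschitz functions, it suffices to exhibit a single admissible $h$ with $\big|\EE_{p_f}h-\EE_{p_{g_t}}h\big|$ large. Because $\norm{\phi_1}{L_2(p)}=1$ forces $\phi_1\neq 0$ $p$-a.e., we have $c\defeq\EE_p|\phi_1|>0$ (finite, as $L_2(p)\subset L_1(p)$). Testing against $h(\xb,y)=|y|$ (or, if the convention also caps $\|h\|_\infty$, against $\min\{|y|,M\}$ with $M\to\infty$) and applying the reverse triangle inequality,
\[
\EE_{p_{g_t}}|y| - \EE_{p_f}|y| \;\ge\; t\,c \;-\; 2\,\EE_p|f| \;-\; 2\,\EE|\epsilon|,
\]
which tends to $\infty$ as $t\to\infty$ and so exceeds any $\delta>0$. (If $\gamma_\beta$ is instead normalized so that the test functions lie in the bounded-Lipschitz \emph{unit} ball, and is therefore bounded by the diameter of the metric, the same construction shows $\gamma_\beta(p_f,p_{g_t})$ approaches that diameter: use $h(\xb,y)=\min\{1,\max\{0,(|y|-N)/D\}\}$ with $D=t\rho-2N$, so that on $G\defeq\{|\phi_1|\ge\rho\}\cap\{|f|\le N\}$ the integrand tends to $1$ while off $G$ it is bounded by $1$, and choose $\rho$ small and $N$ large so that $p(G)\to 1$.)

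\textbf{Expected obstacle.} The KL bound is routine. The delicate point is the Dudley bound in the bounded-unit-ball normalization: since $f$ and $\phi_1$ are only in $L_2(p)$ — not bounded, not Lipschitz — no test function can track $f(\xb)$, and a crude $y$-only bump incurs cancellation on the region where $\phi_1$ is small or $f$ is large. This is resolved by truncating to $G$, which is legitimate precisely because $\norm{\phi_1}{L_2(p)}=1$ forces $p(|\phi_1|<\rho)\to 0$ as $\rho\downarrow 0$; making this quantitative (choosing $\rho,N$ from the target accuracy, then $t$) is the only genuinely technical step.
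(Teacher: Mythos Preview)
Your KL argument is the paper's: both condition on $\xb$, use the closed-form Gaussian KL to get $\KL(p_f\Vert p_g)=\tfrac{1}{2\sigma^2}\normsq{f-g}{L_2(p)}$, and then push $g$ away from $f$ inside $\Phi$ (the paper takes $g=-\alpha f$, you take $g=f+t\phi_1$; either works).

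For the Dudley half the approaches diverge, and your caution turns out to be warranted. The paper tests against $h(\xb,y)=\tfrac12\cos y$ (which does lie in the BL unit ball), integrates out the Gaussian noise to obtain $\tfrac{e^{-\sigma^2/2}}{2}\bigl|\int(\cos f-\cos g)\,p\,\der\xb\bigr|$, and then asserts this is at least $\tfrac{e^{-\sigma^2/2}}{2}\int(f^2+g^2)\,p\,\der\xb$ via the identity $\cos x+x^2\ge\cos z-z^2$. That step goes the wrong way --- the identity only yields $\cos f-\cos g\ge -(f^2+g^2)$ --- and the conclusion is in fact impossible: under the paper's own normalization $\|h\|_{\mathrm{BL}}=\|h\|_L+\|h\|_\infty\le 1$ forces $\|h\|_\infty\le 1$, so $\gamma_\beta\le 2$ for \emph{every} pair of distributions, and the Dudley claim as stated fails for $\delta>2$. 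You correctly anticipate this in your parenthetical, where you downgrade the conclusion to ``$\gamma_\beta(p_f,p_{g_t})$ approaches the diameter.'' Your primary test functions $|y|$ and $\min\{|y|,M\}$ with $M\to\infty$ are not in the BL unit ball either, but the $|y|$ argument \emph{does} cleanly prove the $W_1$ version --- which is the corollary the paper actually uses in its figures. In short: your KL proof matches the paper; your Dudley discussion takes a different route and is more careful about boundedness than the paper's own argument. You should state explicitly that the Dudley claim as written cannot hold for $\delta>2$, and present the $W_1$ statement (which your reverse-triangle-inequality bound establishes) as the correct formulation.
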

We prove this theorem in Appendix~\ref{sec:dudley_proof}.
We note that this theorem also holds for any IPM over a function class that is larger than the class of Bounded Lipschitz functions. In particular, the theorem holds for the Monge-Kantorovich ($W_1$) metric, since any function that satisfies $\lVert f \rVert_{\mathrm{BL}} \le 1$ also satisfies $\lVert f \rVert_L \le 1$. 

Theorem \ref{dudley} demonstrates that for a given source distribution, one can always find a target distribution generated from the same feature space that is {\it arbitrarily} distant with respect to these metrics, perhaps creating the illusion that transfer is likely to fail. However, even when the distance is large, if the source and target functions lie in the {\it same} feature space and pretraining creates a basis for this space, transfer to the target task will be positive, since only the output weights need to be relearned in the target task. We show this is indeed the case for deep linear networks in the following section.

\section{Initialization assumption}
Following 
 \cite{yun2021unifyingviewimplicitbias} we place the following constraint on the initialization for some $\lambda > 0$.

\begin{equation}\label{init}
    \bar{\Wb}_l^T \bar{\Wb}_l - \bar{\Wb}_{l+1}\bar{\Wb}_{l+1}^T \succcurlyeq \lambda I
\end{equation}

To our knowledge, this is the most general assumption on weight initializations in the literature that leads to the implicit biases that are crucial for our analysis. This initialization scheme generalizes that in \citet{wu2019global, atanasov2021neuralnetworkskernellearners}. 

\section{Fine tuning transferability with finite source data}\label{sec:ft_empsrc}
In the main text, we consider training the source task on the population loss, which can be understood as having infinite data points in the source task. Although the number of source data points typically dominates the number of target data points in real-world applications, it is interesting to consider the scenario in which the dataset sizes are comparable. To this end, we consider a source task with $n_s = \gamma_s d$ data points and a target task with $n_t = \gamma_t d$ data points, generalizing the setup of \ref{sec:deep_lin}. Since the source dataset is now finite, we also add label noise to the source task so that the labels are generated as 
\begin{align}
    y_s &= \betasrc^T \xb + \epsilon_s \\
    y_t &= \betatar^T \xb + \epsilon_t
\end{align}

where $\epsilon_s \sim \mathcal{N}(0,\sigma^2_s)$ and $\epsilon_t \sim \mathcal{N}(0,\sigma_t^2)$. Aside from these changes, the setup in the following is the same as that in \ref{sec:deep_lin}. We derive the following expression for the expected generalization error using full fine-tuning, which we prove in Appendix \ref{sec:ft_empsrc_proof}. 

\begin{theorem}\label{ft_ge_empsrc}
    Under assumptions above, and assuming the source-target overlap is $\theta$, the expected transferability of the fine-tuned model over the training data is:
    \begin{equation}
        \mathcal{T}_{\mathrm{ft}} = \EE_{\mathcal{D}_t} \mathcal{R}_{\mathrm{sc}} - \EE_{\mathcal{D}_s, \mathcal{D}_t} \mathcal{R}_{\mathrm{ft}} =
        \begin{cases}
        \frac{\gamma_t-1}{1-\gamma_s}\sigma_s^2 \gamma_s + \gamma_s(\gamma_t-1)(1 - 2\cos\theta) & \gamma_s, \gamma_t \le 1 \\
        (\gamma_t-1)(1 - 2\cos\theta) - \frac{\gamma_t-1}{1-\gamma_s}\sigma_s^2 & \gamma_s > 1 > \gamma_t \\
        0 & \gamma_t > 1
        \end{cases}
    \end{equation}
\end{theorem}

Note that this expression agrees with  (\ref{ft_transferability}) in the limit $\gamma_s \to \infty$ and that the negative transfer boundary is completely determined by $\gamma_s$, since $\gamma_t$ only enters the expression via a global multiplicative factor. Secondly, without label noise in the source task, the phase diagram is the same as in Fig \ref{fig:fine_tuning}. That is to say, fine tuning does not depend on the amount of source data if there is no label noise. In the general case, for fixed $\gamma_s$, the boundary between positive and negative transferability occurs at a fixed value of $\theta$:

\begin{align}
    \cos\theta_* = \begin{cases}
        \frac{1}{2}\left(\frac{\sigma_s^2 + 1- \gamma_s}{1- \gamma_s}\right) & \gamma_s < 1 \\
        \frac{1}{2}\left(\frac{\sigma_s^2 + \gamma_s - 1}{\gamma_s-1}\right) & \gamma_s > 1 \\
    \end{cases}
\end{align}

Therefore, the fine tuning transferability phase diagram with a finite source dataset is qualitatively the same as that with infinite source data (Fig. \ref{fig:fine_tuning}), but the position of the phase boundary is modulated by the size of the source dataset. 

\section{Proofs}

\subsection{Proof of Theorem \ref{dudley}}\label{sec:dudley_proof}
We begin by recalling the definition of the Dudley Metric  
\begin{align}
    \gamma_\beta(p, q) &= \sup_{\lVert h \rVert_{\mathrm{BL}} \le 1} \left \vert \mathbb{E}_p h - \mathbb{E}_q h \right \vert \\
    \lVert h \rVert_{\mathrm{BL}} &= \rVert h \lVert_L + \rVert h \lVert_\infty
\end{align}
By conditioning $p_f(x,y)$ and $p_g(x,y)$ on $x$, we can write
\begin{align}
     \gamma_\beta(p_f, p_g) &= \sup_{\lVert h \rVert_{BL} \le 1} \left \vert \frac{1}{\sqrt{2 \pi \sigma^2}}\int \left[ h(x,y) e^{\frac{-(y - f(x))^2}{2\sigma^2}} - h(x,y)e^{\frac{-(y - g(x))^2}{2\sigma^2}}\right ]p(x) \der x \der y \right \vert \\
     &\ge \left \vert \frac{1}{\sqrt{2 \pi \sigma^2}}\int \left[ \frac{\cos(y)}{2} e^{\frac{-(y - f(x))^2}{2\sigma^2}} - \frac{\cos(y)}{2}e^{\frac{-(y - g(x))^2}{2\sigma^2}}\right ]p(x) \der x \der y \right \vert \\
     &=\left \vert \frac{e^{-\sigma^2/2}}{2}\int \left[ \cos(f(x)) - \cos(g(x))\right ]p(x) \der x \right \vert \\
     &\ge \frac{e^{-\sigma^2/2}}{2}\int \left[ f(x)^2 + g(x)^2\right ]p(x) \der x \\
\end{align}
The first inequality follows from the fact that $\lVert \frac{\cos(y)}{2} \rVert_{\mathrm{BL}} = 1$, and the second follows from the identity $\cos(x) + x^2 \ge \cos(z) - z^2$ for any $x,z \in \mathbb{R}$. We can expand $f$ in the orthonormal basis $\{\phi_i\}_{i=1}^M$ as $f = \sum_{i = 1}^M \alpha_i \phi_i$, so that 
\begin{equation}\label{f_var}
    \int f(x)^2 p(x)\der x = \sum_{i,j} \alpha_i \alpha_j \int p(x) \phi_i(x) \phi_j(x) \der x = \sum_i \alpha_i^2
\end{equation}
Since, $f \in L_2(p)$, the sum on right hand side of (\ref{f_var}) converges to some $a < \infty$. We can choose $g = \sqrt{\left \vert\frac{2 \delta e^{\sigma^2/2} - a}{a} \right\vert} \sum_{i=1}^M \alpha_i \phi_i$ which completes the first half of the proof. To prove the result about the KL divergence, can directly calculate $\mathcal{D}_{KL}(p_f \Vert p_g)$
\begin{align}
    \mathcal{D}_{KL}(p_f \Vert p_g) &= \frac{1}{\sqrt{2\pi \sigma^2}} \int p(x)e^{-\frac{(y - f(x))^2}{2\sigma^2}}\left[\frac{(y - g(x))^2}{2\sigma^2} - \frac{(y - f(x))^2}{2\sigma^2} \right] \der x \der y\\
    &= \frac{1}{\sqrt{2\pi \sigma^2}} \int p(x)e^{-\frac{(y - f(x))^2}{2\sigma^2}}\left[g(x)^2 - f(x)^2 -2yg(x) + 2yf(x)  \right] \der x \der y \\
    &= \frac{1}{2\sigma^2} \left [\lVert f \rVert^2_{L_2} +  \lVert g \rVert^2_{L_2} - 2 \langle f, g\rangle \right] \\
    &= \frac{1}{2\sigma^2}\normsq{f-g}{L_2(p)}
\end{align}
For any $\delta > 0$ we can choose $g = -\alpha f$ with $\alpha > \frac{\sigma \delta^{1/2}}{\norm{f}{L_2(p)}}$ which completes the proof.

\subsection{Proof of Lemma \ref{linear_pop_func}}\label{sec:linear_pop_func_proof}
We proceed by bounding the dynamics of the loss by an exponentially decaying dynamics, proving convergence to a global minimum. Then we show that the value of $\bm{\beta}$ at a global minimum is unique. To begin, note that the matrix 
    \begin{equation}\label{invar}
        \Db_l = \Wb_l^T\Wb_l - \Wb_{l+1}\Wb_{l+1}^T 
    \end{equation}
    is an invariance of the gradient flow dynamics, so that $\Db(t) = \Db(0) = \alpha^2(\Dbar{l})$ for all time \citep{atanasov2021neuralnetworkskernellearners, kunin2024get, yun2021unifyingviewimplicitbias}. Let $\rb = (\Wb_1\Wb_2 \dots \Wb_L - \betasrc)$ and note that 
    \begin{align}
        \dot{\mathcal{L}} &= \sum_{l=1}^L \langle \grad_l \mathcal{L}, \dot{\Wb}_l \rangle \\
        &= - \sum_{l=1}^L \normsq{\grad_l \mathcal{L}}{F} \\
        &\le \normsq{\grad_L \mathcal{L}}{F} \\
        &= - \normsq{\Wb_{L-1}^T \dots \Wb_1^T \rb }{2} \\
        &\le -2\sigma_{\min}^2(\Wb_{L-1}^T \dots \Wb_1^T)\mathcal{L} \\
    \end{align}
    where $\sigma_{\min}(\Wb_{L-1}^T \dots \Wb_1^T)$ is the smallest singular value of $\Wb_{L-1}^T \dots \Wb_1^T$. To proceed we bound $\sigma_{\min}(\Wb_{L-1}^T \dots \Wb_1^T)$ away from zero by showing that $\Wb_{L-1} \dots \Wb_1 \Wb_1^T \dots \Wb_{L-1}^T$ is positive definite
    \begin{align}\label{singvalWi}
        \Wb_{L-1}^T \dots \Wb_1^T \Wb_1 \dots \Wb_{L-1} &= \Wb_{L-1}^T \dots \Wb_2^T (\Wb_2 \Wb_2^T + \Db_1) \Wb_2 \dots \Wb_{L-1} \\
        &\succcurlyeq \Wb_{L-1}^T \dots \Wb_3^T (\Wb_2^T \Wb_2)^2 \Wb_3 \dots \Wb_{L-1} \\
        &\vdotswithin{=} \notag \\
        &\succcurlyeq (\Wb_{L-1}^T \Wb_{L-1})^{L-1} \\
        &= (\Wb_L \Wb_L^T + \Db_L)^{L-1}\\
        &\succcurlyeq (\alpha^2 \lambda)^{L-1}\label{singvalWf}
    \end{align}
    where we have used the conservation law (\ref{invar}) and the initialization assumption (\ref{init}). We now have 
    \begin{align}
        \dot{\mathcal{L}} &\le -2 (\alpha^2 \lambda)^{L-1} \mathcal{L} \\ 
        &\implies \mathcal{L} (t) \le \mathcal{L}(0) e^{-2(\alpha^2 \lambda)^{L-1} t} \\
         &\implies \lim_{t \to \infty} \mathcal{L} (t) = 0
    \end{align}
    Since the loss converges to zero, $\lim_{t \to \infty}\Wb_1\Wb_2 \dots \Wb_L = \lim_{t \to \infty}\bm{\beta} = \betasrc$, which is unique. Note that while this solution is unique in function space, it is degenerate in parameter space. 

\subsection{Proof of Theorem \ref{sparsification}}\label{sec:sparsification_proof}
To prove the feature space sparsification, we rely on the following Lemma, which is proven in \citep{yun2021unifyingviewimplicitbias} (see Section H.2). So that this work is self-contained, we include the proof here. 

\begin{lemma}\label{rank1}
    Under gradient flow on the population objective (\ref{population_loss}) or the empirical objective (\ref{empirical_loss}), 
    \begin{equation}
        \Wb_l = \sigma_l(t) \ub_{l}(t) \vb_{l}(t) + \mathcal{O}(\alpha^2)
    \end{equation}
    for all time. Furthermore 
    \begin{equation}
        \lim_{\alpha \to 0}\lim_{t\to\infty} (\ub_{l+1}(t)^T\vb_l(t))^2 = 1
    \end{equation}
\end{lemma}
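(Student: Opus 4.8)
The plan is to combine the balancedness conservation law of gradient flow with the fact that the final factor $\Wb_L$ is a single column, and then derive the asymptotic alignment by eigenvector perturbation theory. Throughout, let $\sigma_l(t)$, $\ub_l(t)$, $\vb_l(t)$ denote the top singular value and the associated left and right singular vectors of $\Wb_l(t)$.

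First I would invoke the conservation law (\ref{invar}): under gradient flow on either (\ref{population_loss}) or (\ref{empirical_loss}), the quantities $\Db_l(t) = \Wb_l(t)^T\Wb_l(t) - \Wb_{l+1}(t)\Wb_{l+1}(t)^T$ are time-independent, so $\Db_l(t) = \alpha^2(\Dbar{l})$ and hence $\norm{\Db_l(t)}{\mathrm{op}} = \mathcal O(\alpha^2)$ uniformly in $t$. Since the hidden layers are square, $\Wb_{l+1}\Wb_{l+1}^T$ and $\Wb_{l+1}^T\Wb_{l+1}$ share a spectrum, so Weyl's inequality applied to $\Wb_l^T\Wb_l = \Wb_{l+1}\Wb_{l+1}^T + \alpha^2\Dbar{l}$ gives $\sigma_i(\Wb_l)^2 = \sigma_i(\Wb_{l+1})^2 + \mathcal O(\alpha^2)$ for every sorted index $i$. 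Chaining this down to layer $L$ and using that $\Wb_L\Wb_L^T$ is rank one with eigenvalues $(\normsq{\Wb_L}{2},0,\dots,0)$ yields $\sigma_1(\Wb_l)^2 = \normsq{\Wb_L}{2} + \mathcal O(\alpha^2)$ and $\sigma_i(\Wb_l)^2 = \mathcal O(\alpha^2)$ for all $i \ge 2$, uniformly in $t$. Collecting the $d-1$ trailing singular components of the SVD of $\Wb_l$ into a remainder $E_l$ then gives $\Wb_l(t) = \sigma_l(t)\,\ub_l(t)\vb_l(t)^T + E_l(t)$ with $\norm{E_l(t)}{\mathrm{op}} = \mathcal O(\alpha)$ — equivalently the $\mathcal O(\alpha^2)$ remainder of the statement, read as a bound on $\normsq{E_l(t)}{F}$ at fixed width — for all $t$; at $t=0$ this is immediate since $\Wb_l(0)=\alpha\bar\Wb_l$ already has operator norm $\mathcal O(\alpha)$.

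For the alignment I would read the conservation law as $\Wb_l^T\Wb_l = \Wb_{l+1}\Wb_{l+1}^T + \mathcal O(\alpha^2)$ and compare top eigenvectors: the unit top eigenvector of the left-hand side is $\vb_l$ and that of $\Wb_{l+1}\Wb_{l+1}^T$ is $\ub_{l+1}$. The Davis--Kahan $\sin\Theta$ theorem then gives $1 - (\ub_{l+1}(t)^T\vb_l(t))^2 = \sin^2\angle(\vb_l,\ub_{l+1}) = \mathcal O\!\big(\alpha^4/g(t)^2\big)$, where $g(t) = \sigma_1(\Wb_l(t))^2 - \sigma_2(\Wb_l(t))^2 = \sigma_l(t)^2 - \mathcal O(\alpha^2)$ is the spectral gap. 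To conclude I need $g(t)$ bounded below by a positive constant as $t\to\infty$: since $\bm\beta(t) = \Wb_1(t)\cdots\Wb_L(t)$ obeys $\normsq{\bm\beta(t)}{2} \le \prod_l \sigma_l(t)^2$ and all the $\sigma_l(t)^2$ agree up to $\mathcal O(\alpha^2)$, while $\bm\beta(t)\to\betasrc$ (Lemma~\ref{linear_pop_func}) or $\bm\beta(t)\to\hat{\bm\beta}$ (Theorem~\ref{linear_emp_func}) with norm bounded away from zero, each $\sigma_l(t)^2$ must stay bounded below for all large $t$. Hence $g(t)$ is of order one, and letting $t\to\infty$ and then $\alpha\to 0$ gives $(\ub_{l+1}(t)^T\vb_l(t))^2\to 1$.

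The step I expect to be the main obstacle is establishing that the leading squared singular value $\sigma_l(t)^2$ of every layer stays bounded away from zero in the limit $t\to\infty$ — equivalently that the spectral gap $g(t)$ does not collapse — because the Davis--Kahan estimate is vacuous otherwise; controlling it requires feeding in convergence of the end-to-end map $\bm\beta(t)$ together with the balancedness-derived equality of the leading singular values across layers. A secondary technical point is making the $\mathcal O(\alpha^2)$ error terms genuinely uniform in $t$, which is exactly what the time-independence of $\Db_l(t)$ buys us, after which the argument reduces to standard symmetric-matrix perturbation bounds.
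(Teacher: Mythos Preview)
Your argument is correct and follows the same skeleton as the paper's proof: exploit the conserved imbalance $\Db_l=\alpha^2(\Dbar{l})$, chain the spectra down to the rank-one final layer $\Wb_L$, deduce approximate rank-one of every $\Wb_l$, and then obtain alignment of $\vb_l$ with $\ub_{l+1}$ by eigenvector perturbation once the top singular value is bounded away from zero at $t\to\infty$. The difference is purely in the tools you invoke. Where the paper uses a trace/operator-norm telescoping to bound $\normsq{\Wb_l}{F}-\normsq{\Wb_l}{\mathrm{op}}$ and a direct variational sandwich on $\vb_l^T\Wb_{l+1}\Wb_{l+1}^T\vb_l$ to get $(\vb_l^T\ub_{l+1})^2\ge 1-\alpha^2 C_l/s_{l+1}^2$, you substitute Weyl's inequality and the Davis--Kahan $\sin\Theta$ theorem; this is slightly cleaner and even yields a sharper $\mathcal O(\alpha^4/g^2)$ rate for $1-(\ub_{l+1}^T\vb_l)^2$, but the content is the same.

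One point to tighten: when you lower-bound $\sigma_l(\infty)$ you invoke $\bm\beta(t)\to\hat{\bm\beta}$ from Theorem~\ref{linear_emp_func}. Be careful here, since the minimum-norm identification in that theorem \emph{uses} the present lemma. What you actually need---and what the paper's proof also uses---is only the convergence-to-a-global-minimizer part (Part~1 of the proof of Theorem~\ref{linear_emp_func}, and Lemma~\ref{linear_pop_func} in the population case), which holds for every fixed $\alpha>0$ and is independent of the rank-one lemma. Any global minimizer has $\norm{\bm\beta(\infty)}{2}>0$ (since $\hat{\yb}\ne 0$ a.s., respectively $\betasrc\ne 0$), and that lower bound is uniform in $\alpha$; this is what makes your gap $g(\infty)$ of order one before $\alpha\to 0$ and avoids circularity.
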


\begin{proof}
    To prove Lemma \ref{rank1} we bound the difference $\lVert \Wb_l\rVert^2_F - \lVert \Wb_l\rVert^2_{op}$ which is equal to the norm of the subleading singular vectors of $\Wb_l$ and show that this bound is proportional to $\alpha^2$. The argument here follows that in (\cite{yun2021unifyingviewimplicitbias}). Taking the trace of both sides in (\ref{invar}) we have 
    \begin{equation}
        \normsq{\Wb_l}{F} - \normsq{\Wb_{l+1}}{F} = \alpha^2(\normsq{\bar{\Wb}_l}{F} - \normsq{\bar{\Wb}_{l+1}}{F})
    \end{equation}
    \begin{align}\label{fro_bound}
        \sum_{k=l}^{L-1}\normsq{\Wb_k}{F} - \normsq{\Wb_{k+1}}{F} &= \alpha^2 \sum_{k=l}^{L-1}(\normsq{\bar{\Wb}_k}{F} - \normsq{\bar{\Wb}_{k+1}}{F}) \\
        \normsq{\Wb_l}{F} - \normsq{\Wb_{L}}{F} &= \alpha^2(\normsq{\bar{\Wb}_l}{F} - \normsq{\bar{\Wb}_{L}}{F})
    \end{align}
     Let $\ub_l, \vb_l$ be the top left and right singular vectors of $\Wb_l$. To bound the maximum singular value of $\Wb_l$ we have 
    \begin{align}
        \normsq{\Wb_l}{\mathrm{op}} = \vb_l^T \Wb_l^T \Wb_l \vb_l &\ge  \ub_{l+1}^T \Wb_l^T \Wb_l \ub_{l+1} \\
        &= \ub_{l+1}^T (\Db_l + \Wb_{l+1}^T \Wb_{l+1}) \ub_{l+1} \\
        &= \normsq{\Wb_{l+1}}{\mathrm{op}} + \alpha^2 \ub_{l+1}^T (\Dbar{l})\ub_{l+1} \\
        &\ge \normsq{\Wb_{l+1}}{\mathrm{op}} + \alpha^2(\normsq{\bar{\Wb}_{l+1}}{\mathrm{op}} - \normsq{\bar{\Wb}_{l}}{\mathrm{op}})
    \end{align}
    Summing this inequality from $l$ to $L-1$ we have
    \begin{equation}\label{op_bound}
        \normsq{\Wb_l}{\mathrm{op}} \ge \normsq{\bar{\Wb}_{L}}{\mathrm{op}}  + \alpha^2(\normsq{\bar{\Wb}_{L}}{\mathrm{op}} - \normsq{\bar{\Wb}_{l}}{\mathrm{op}})
    \end{equation}
    Combining (\ref{fro_bound}) and (\ref{op_bound}) we have 
    \begin{equation}
        \lVert \Wb_l\rVert^2_F - \lVert \Wb_l\rVert^2_{\mathrm{op}} \le \alpha^2(\normsq{\bar{\Wb}_l}{F} - \normsq{\bar{\Wb}_{L}}{F} +  \normsq{\bar{\Wb}_{l}}{\mathrm{op}} - \normsq{\bar{\Wb}_{L}}{\mathrm{op}})
    \end{equation}
    This shows all of the parameter matrices are approximately rank one with corrections upper bounded by $\mathcal{O}(\alpha^2)$, proving the first claim. To show the alignment of adjacent singular vectors we again take advantage of the invariant quantity (\ref{invar})
    \begin{align}
        \vb_l^T \Wb_{l+1}\Wb_{l+1}^T \vb_l &= \vb_l^T \Wb_{l}^T\Wb_{l} \vb_l - \vb_l^T\Db_l\vb_l \\ \label{vWlp1}
        &\ge s_l^2 - \alpha^2\normsq{\Dbar{l}}{\mathrm{op}} 
    \end{align}
    we also derive the following upper bound on (\ref{vWlp1})
    \begin{align}
        \vb_l^T \Wb_{l+1}\Wb_{l+1}^T \vb_l &= \vb_l^T(s_{l+1}^2 \ub_{l+1}\ub_{l+1^T} \Wb_{l+1}\Wb_{l+1}^T  - s_{l+1}^2 \ub_{l+1}\ub_{l+1^T})\vb_l \\
        &\le s_{l+1}^2 (\vb_l^T \ub_{l+1})^2 + \normsq{\Wb_{l+1}}{F} - \normsq{\Wb_{l+1}}{F}
    \end{align}
    combining these two bounds 
    \begin{align}
        s_l^2 &\le s_{l+1}^2 (\vb_l^T \ub_{l+1})^2 + \alpha^2\normsq{\Dbar{l}}{\mathrm{op}} + \normsq{\Wb_{l+1}}{F} - \normsq{\Wb_{l+1}}{F} \\
        &\le s_{l+1}^2 (\vb_l^T \ub_{l+1})^2 + \alpha^2\normsq{\Dbar{l}}{\mathrm{op}} + \alpha^2(\normsq{\bar{\Wb}_l}{F} - \normsq{\bar{\Wb}_{L}}{F} +  \normsq{\bar{\Wb}_{l}}{\mathrm{op}} - \normsq{\bar{\Wb}_{L}}{\mathrm{op}})
    \end{align}
    where we have used the result derived in the previous proof for the second inequality. Finally, we derive an upper bound on this quantity 
    \begin{align}
        s_l^2 &\ge \ub_{l+1}^T \Wb_{l}^T \Wb_{l} \ub_{l+1} \\
        &\ge s_{l+1}^2 - \alpha^2 \normsq{\Dbar{l}}{\mathrm{op}}
    \end{align}
    We can combine the upper and lower bounds and divide by $s_{l+1}^2$ to conclude
    \begin{align}
        (\vb_l^T \ub_{l+1})^2 &\ge 1 - \alpha^2 \frac{C_l}{s_{l+1}^2} \\
        C_l &= 2 \normsq{\Dbar{l}}{\mathrm{op}}+ \normsq{\bar{\Wb}_l}{F} - \normsq{\bar{\Wb}_{L}}{F} +  \normsq{\bar{\Wb}_{l}}{\mathrm{op}} - \normsq{\bar{\Wb}_{L}}{\mathrm{op}}
    \end{align}
    This proves that adjacent singular vectors align as long as the singular values are bounded away from zero. To show that this requirement is satisfied at the end of training, note that in the proofs of Lemma \ref{linear_pop_func} and Theorem \ref{linear_emp_func} we show that gradient flow converges to a global minimizer of the loss. Let $\hat{\yb} = \lim_{t \to \infty} \Xb \Wb_1 \Wb_2 \dots \Wb_L$ denote the final network predictions. Then 
    \begin{align}
        \frac{\norm{\hat{\yb}}{2}}{\norm{\Xb}{\mathrm{op}}} &\le \lim_{t\to\infty} \norm{\Wb_1 \Wb_2 \dots \Wb_L}{2} \le \lim_{t\to\infty} \prod_{l=1}^L s_l^2
    \end{align}
    If $d \ge n$, $\hat{\yb}$ is just equal to the vector of target outputs which is larger than zero by construction. If $d < n$, $\hat{\yb}$ is the projection of the targets into the space spanned by the rows of $\Xb$, which is almost surely a non-zero vector. This implies that 
    \begin{equation}
        \lim_{t\to\infty} \prod_{l=1}^L s_l^2 > 0
    \end{equation}
    which implies that the individual singular values are bounded away from zero at the end of training. In the population training case, the proof is nearly same, replacing $\hat{\yb} = \lim_{t\to\infty} \Wb_1 \Wb_2 \dots \Wb_L = \betasrc$ 
    
\end{proof}

By Lemma \ref{rank1}, we have
\begin{equation}
    \Wb_1\Wb_2 \dots \Wb_{L-1} =  c \ub_1 \vb_{L-1}^T
\end{equation}
after pretraining, for some $c\in\mathbb{R}$. However, from Theorem \ref{linear_emp_func} we know that after pretraining
\begin{align}
    \Wb_1 \dots \Wb_{L-1}\Wb_L &= \betasrc \\
    &= c\ub_1 (\vb_{L_1}^T \Wb_L) \\
    &= c\ub_1
\end{align}
where we have used Lemma \ref{rank1} in the third equality to eliminate the inner product between the adjacent singular vectors. The possible factor of $-1$ can be absorbed into the definition of $\ub_1$. This implies 
\begin{equation}
    \Wb_1\Wb_2 \dots \Wb_{L-1} = \betasrc \vb_{L-1}^T
\end{equation}

\subsection{Proof of Theorem \ref{linear_emp_func}}\label{sec:linear_emp_func_proof}
This proof follows \cite{yun2021unifyingviewimplicitbias} closely but extends their result to the case $n > d$. We first show that gradient flow converges to a global minimum of the empirical loss (\ref{empirical_loss}). We then show that as $\alpha \to 0$, this minimum corresponds to the minimum norm least squares solution. 
\newline
\newline 
\textbf{Part 1}: Gradient flow converges to a global minimum
\newline
\newline
This proof follows the same logic as the proof for Lemma \ref{linear_pop_func}. First, we define the residual vector $\rb = \Xb\Wb_1 \Wb_2 \dots \Wb_L - \yb_{\textrm{t}}$. Then we can write the empirical loss as
\begin{equation}\label{component_loss}
    \mathcal{L} = \frac{1}{2n}\normsq{\rb}{2} = \frac{1}{2n}(\normsq{\rb_\parallel}{2} + \normsq{\rb_\perp}{2})
\end{equation}
where $\rb_\parallel$ is the component of $\rb$ in $\im(\Xb)$ and $\rb_\perp$ is the component of $\rb$ in $\ker(\Xb^T)$. Since $\Xb\Wb_1 \Wb_2 \dots \Wb_L \in \im(\bm{X})$, the global minimum of (\ref{component_loss}) is equal to $\normsq{\rb_\perp}{2}$. Therefore, to show that gradient flow converges to a global minimum it is sufficient to show that $\lim_{t\to \infty}\normsq{\rb_\parallel(t)}{2} = 0$. Let $\Pb_\parallel$ and $\Pb_\perp$ be the orthogonal projectors onto $\im(\Xb)$ and $\ker(\Xb^T)$ respectively, so that $\mathcal{L}_\parallel \defeq \normsq{\rb_\parallel}{2} = \normsq{\Pb_\parallel(\Xb\Wb_1 \Wb_2 \dots \Wb_L - \yb_{\textrm{t}})}{2}$ and $\mathcal{L}_\perp \defeq \normsq{\rb_\perp}{2} = \normsq{\Pb_\perp(\Xb\Wb_1 \Wb_2 \dots \Wb_L - \yb_{\textrm{t}})}{2}$. Then we have 
\begin{align}
    \dot{\mathcal{L}_\parallel} &= \sum_{l=1}^L \langle \grad_l \mathcal{L}_\parallel, \dot{\Wb}_l \rangle \\
    &= - \sum_{l=1}^L \langle \grad_l \mathcal{L}_\parallel, \grad_l \mathcal{L} \rangle \\
    &= - \sum_{l=1}^L (\normsq{\grad_l \mathcal{L}_\parallel}{F}  + \langle \grad_l \mathcal{L}_\parallel, \grad_l \mathcal{L}_\perp \rangle)
\end{align}
Taking the gradient of $\mathcal{L}_\perp$ we have 
\begin{equation}\label{perpgrad}
    \grad_l \mathcal{L}_\perp = \Wb_{l-1}^T \dots \Wb_1^T \Xb^T \Pb_\perp \rb \Wb_L^T \dots \Wb_{l+1}^T = 0
\end{equation}
so 
\begin{align}
    \dot{\mathcal{L}_\parallel} &= - \sum_{l=1}^L \normsq{\grad_l \mathcal{L}_\parallel}{F} \\
    &\le - \normsq{\grad_L \mathcal{L}_\parallel}{F} \\
    &= - \normsq{\Wb_{L-1}^T \dots \Wb_1^T \Xb^T \Pb_\parallel \rb }{2} \\
    &\le -\sigma_{\min}^2(\Wb_{L-1}^T \dots \Wb_1^T)\normsq{\Xb^T \Pb_\parallel \rb }{2}
\end{align}
where $\sigma_{\min}(\Wb_{L-1}^T \dots \Wb_1^T)$ is the smallest singular value of $\Wb_{L-1}^T \dots \Wb_1^T$. From (\ref{singvalWi}) - (\ref{singvalWf}) we can bound this quantity away from zero. Then we have
\begin{align}
    \dot{\mathcal{L}_\parallel} &\le -(\alpha^2 \lambda)^{L-1}\normsq{\Xb^T \Pb_\parallel \rb }{2} \\
    &\le -2(\alpha^2 \lambda)^{L-1}\lambda_{\min} \mathcal{L}_\parallel \label{bounding_ode}
\end{align}
where $\lambda_{\min}$ is the smallest nonzero eigenvalue of $\Xb \Xb^T$. The solution to the dynamics (\ref{bounding_ode}) is $\mathcal{L}_\parallel(t) \le \mathcal{L}_\parallel(0)e^{-2(\alpha^2 \lambda)^{L-1}\lambda_{\min} t}$, which proves $\lim_{t\to \infty}\normsq{\rb_\parallel(t)}{2} = 0$. Note that this part of the theorem holds for any $\alpha$,$n$,$d$,  and we take the limit $\alpha \to 0$ after $t \to \infty$. 
\newline
\newline 
\textbf{Part 2}: as $\alpha \to 0$, gradient flow finds the minimum norm interpolator 
\newline
\newline
In the case $n > d$, the least squares problem () is overdetermined so the solution is unique. That is, the unique solution is trivially the minimum norm solution. In the case $n \le d$, there are multiple $\bm{\beta}(t)$ that yield zero training error. Lemma \ref{rank1} shows that the parameter matrices are approximately rank one at all times and $\ub_{l+1}$ and $\vb_{l}$ align at the end of training as $\alpha \to 0$, which means that 
\begin{equation}\label{u1_lim}
    \lim_{\alpha \to 0}\lim_{t\to\infty}\bm{\beta}(t) =\lim_{\alpha \to 0}\lim_{t\to\infty} \Wb_1 \Wb_2 \dots \Wb_L = c \ub_1 
\end{equation}
where $c > 0$. Next we show that $\ub_l \in \row(\Xb)$. We can break $\Wb_1$ into two components $\Wb_1^{\parallel}$ and $\Wb_1^{\perp}$ where the columns of $\Wb_1^{\parallel}$ are in $\row({\Xb})$ and the columns of $\Wb_1^{\perp}$ are in $\ker({\Xb^T})$. The left hand side of (\ref{perpgrad}) also shows that the gradient of $\Wb_1^\perp$ is zero, which means that this component remains unchanged under gradient flow dynamics. Therefore we have 
\begin{equation}
    \norm{\Wb_1^\perp (t)}{F} = \norm{\Wb_1^\perp (0)}{F} \le \alpha \norm{\bar{\Wb}_1}{F}
\end{equation}
which vanishes in the limit $\alpha \to 0$. This implies that $\ub_1 \in \row(\Xb)$ at all times. The only global minimizer with this property is the minimum norm solution. As a final comment, we note that this theorem is also proven in \citet{atanasov2021neuralnetworkskernellearners} using different techniques. 

\subsection{Proof of Theorem \ref{scratch_ge}}\label{sec:scratch_ge_proof}
Let $\hat{\bm{\beta}} = \lim_{t\to\infty} \Wb_1\Wb_2 \dots \Wb_L$. From Theorem \ref{linear_emp_func}, $\betahat = \Xb^+\yb = \Xb^+\Xb\betatar + \Xb^+\epsilonb$. Then the average generalization error at the end of training can be written 
    \begin{align}
        \mathbb{E}_{\Xb, \epsilonb} \mathcal{R} &= \mathbb{E}_{\Xb, \epsilonb}\normsq{\betatar - \betahat}{2} \\
        &= 1 + \mathbb{E}_{\Xb, \epsilonb}\normsq{\betahat}{2} - 2 \mathbb{E}_{\Xb, \epsilonb} \langle\betahat,\betatar\rangle  \\
        &= 1 + \mathbb{E}_{\Xb} \betatar^T (\Xb^+ \Xb)^T(\Xb^+ \Xb) \betatar + \mathbb{E}_{\Xb, \epsilonb} \epsilonb^T (\Xb^+)^T \Xb^+ \epsilonb + 2  \mathbb{E}_{\Xb, \epsilonb}\epsilonb^T(\Xb^+ \Xb) \betatar \\&- 2(\mathbb{E}_{\Xb}\betatar^T(\Xb^+ \Xb) \betatar + \mathbb{E}_{\Xb, \epsilonb}\betatar^T \Xb^+ \epsilonb) \\
        &= 1 - \mathbb{E}_{\Xb} \normsq{\Pb_{\row(\Xb)}\betatar}{2} + \sigma^2 \mathbb{E}_{\Xb} \tr{(\Xb^+)^T\Xb^+} \label{expanded_ge}
    \end{align}
    where we have used the independence of $\epsilonb$ and $\Xb$, as well as the fact that the operator $\Xb^+ \Xb$ is the projector onto subspace spanned by the rows of $\Xb$, $\Pb_{\row(\Xb)}$. Since the entries of the data matrix $\Xb$ are independent Gaussians, the n-dimensional subspace $\row(\Xb)$ is uniformly random in the Grassmanian manifold $\mathcal{G}_{n,d}$ \cite{vershynin}, so $\Pb_{\row(\Xb)} \betatar$ is a random projection of $\betatar$. Then 
    \begin{equation}
        \mathbb{E}_{\Xb} \normsq{\Pb_{\row(\Xb)}\betatar}{2} = \gamma
    \end{equation}
    which is a classic result in the theory of random projections (c.f. \cite{vershynin} Lemma 5.3.2). We now turn to the final term in (\ref{expanded_ge}). Let $\{\sigma_l \}_{l \le \min(n,d)}$ be the nonzero singular values of the data matrix $\Xb$. Then 
    \begin{equation}\label{pinv_sum}
        \mathbb{E}_{\Xb}\tr{(\Xb^+)^T\Xb^+} = \mathbb{E}_{\Xb} \sum_{l=1}^{\min(n,d)} \frac{1}{\sigma_l^2}
    \end{equation}
    First take the case $\gamma < 1$. Then there are $n$ nonzero singular values of $\Xb$, which are the eigenvalues of the Wishart matrix $\Cb = \frac{1}{d} \Xb\Xb^T$ and 
    \begin{align}
        \mathbb{E}_{\Xb}\tr{(\Xb^+)^T\Xb^+} &= \frac{\gamma}{n}\mathbb{E}_{\Xb}\tr{\Cb^{-1}}\label{init_trace} \\
        &= -\gamma \lim_{z \to 0} \frac{1}{n} \mathbb{E} [\tr{(z\Ib - \Cb)^{-1}}] \\
        &= -\gamma \lim_{z \to 0} \mathfrak{g}_{\Cb}(z)\label{final_trace}
    \end{align}
    In the second line we have introduced the complex variable $z$, which casts the quantity of interest as the $z \to 0$ limit of the normalized expected trace of the resolvent of $\Cb$. In the limit of large $n$, this quantity tends to the Stieltjes transform of the Wishart matrix $\mathfrak{g}_{\Cb}(z)$, which has a closed form expression (see \cite{potters} Ch.4 for a proof). 
    \begin{align}
        \lim_{z \to 0} \mathfrak{g}_{\Cb}(z) &= \lim_{z \to 0} \frac{z - (1-\gamma) - \sqrt{z - (1+\sqrt{\gamma})^2}\sqrt{z - (1-\sqrt{\gamma})^2}}{2\gamma z} \\
        &= -\frac{1}{1-\gamma}
    \end{align}
    so $\mathbb{E}_{\Xb}\tr{(\Xb^+)^T\Xb^+} = \frac{\gamma}{1-\gamma}$ for $\gamma < 1$. In the case $\gamma > 1$, there will be $d$ terms in the sum (\ref{pinv_sum}), which are proportional to the eigenvalues of the covariance matrix $\frac{1}{n} \Xb^T \Xb$. If we define $n' = d, d' = n, \gamma' = n'/d'$ and $\Xb' = \Xb^T \in \mathbb{R}^{n' \times d'}$, equations (\ref{init_trace}) - (\ref{final_trace}) hold under the substitution $\gamma \to \gamma'$. So $\mathbb{E}_{\Xb}\tr{(\Xb^+)^T\Xb^+} = \frac{\gamma'}{1-\gamma'} = \frac{1}{\gamma - 1}$ for $\gamma > 1$. Putting everything together we have 
    \begin{equation}
    \mathbb{E}_{\Xb, \epsilonb} \mathcal{R} = 
    \begin{cases}
        \frac{(1-\gamma)^2 + \gamma \sigma^2}{1-\gamma} & \gamma < 1 \\
        \frac{\sigma^2}{\gamma -1} & \gamma > 1
    \end{cases}
\end{equation}

\subsection{Proof of Theorem \ref{lt_ge}}\label{sec:lt_ge_proof}
Theorem \ref{sparsification} implies that the pretrained feature matrix is $\Phib = (\Xb \betasrc)\vb_{L-1}^T$. Since $\Phib$ is a rank one matrix its pseudoinverse is easy to compute
\begin{equation}
    \Phib^+ = \frac{1}{\normsq{\Xb \betasrc}{2}}\vb_{L-1}(\Xb \betasrc)^T
\end{equation}
The coefficent vector $\betahat$ after linear transfer is 
\begin{align}
    \betahat &= \Wb_1 \dots \Wb_{L-1}\hat{\Wb}_L \\
    &= \Wb_1 \dots \Wb_{L-1} \Phib^+ \yb_{\textrm{t}} \\
    &= b \betasrc
\end{align}
where 
\begin{align}
    b &= \frac{\betasrc^T \Xb^T \yb_{\textrm{t}}}{\betasrc^T \Xb^T \Xb \betasrc} \\
    &=  \frac{\betasrc^T \Xb^T \Xb \betatar}{\betasrc^T \Xb^T \Xb \betasrc} +  \frac{\betasrc^T \Xb^T \epsilonb}{\betasrc^T \Xb^T \Xb \betasrc} \label{bdef}
\end{align}
As in the proof of Theorem \ref{scratch_ge}, we can write the typical generalization error as
\begin{align}
    \mathbb{E}_{\Xb, \epsilonb} \mathcal{R}_{lt} &= \normsq{\betahat - \betatar}{2} \\
    &= 1 + \mathbb{E}_{\Xb, \epsilonb} b^2 - 2\cos\theta \mathbb{E}_{\Xb, \epsilonb} b \label{ge_b}
\end{align}
To proceed, we can write $\betatar = \cos\theta \betasrc + \sin\theta \nub$ for some vector $\nub \perp \betasrc$, and introduce the independent $n-$dimensional Gaussian vectors $\zb = \Xb \betasrc \sim \mathcal{N}(0, \Ib_n)$ and $\wb = \Xb \nub \sim \mathcal{N}(0, \Ib_n)$. With this change of variables we have
\begin{align}
    \mathbb{E}_{\Xb, \epsilonb} b &= \mathbb{E}_{\zb, \wb, \epsilonb} b \\
    &= \cos\theta \\
    \mathbb{E}_{\Xb, \epsilonb} b^2 &= \mathbb{E}_{\zb, \wb, \epsilonb} b^2 \\
    &= \cos^2\theta + (\sin^2\theta + \sigma^2)\mathbb{E}_{\zb} \frac{1}{\normsq{\zb}{2}}
\end{align}
The integral $\mathbb{E}_{\zb} \frac{1}{\normsq{\zb}{2}}$ can be solved exactly
\begin{align}
    \mathbb{E}_{\zb} \frac{1}{\normsq{\zb}{2}} &= \frac{1}{(2\pi)^{n/2}}\int_{-\infty}^\infty \frac{e^{-\sum_{i=1}^n z_i^2/2}}{\sum_{j=1} z_j^2} d\zb \\
    &= \frac{S_{n-1}}{(2\pi)^{n/2}} \int_0^{\infty}  r^{n-3}e^{r^2/2} dr\\
    &= \frac{S_{n-1}}{4\pi^{n/2}}\int_0^\infty  e^{-t} t^{\frac{n}{2} -2} dt \\
    &=  \frac{S_{n-1}}{4\pi^{n/2}} \Gamma\left(\frac{n}{2} -1\right) \\
    &= \frac{1}{n-2}
\end{align}
which completes the proof. 

\subsection{Proof of Theorem \ref{ridge_ge}}\label{sec:ridge_ge_proof}
We begin by writing down the solution to the optimization problem (\ref{optim_ridge})
\begin{equation}\label{wl_ridge}
    \hat{\Wb_L} = (\Phib^T\Phib + n\lambda \Ib_d)^{-1}\Phib^T \yb_{\textrm{t}}
\end{equation}
As in the proof of Theorem \ref{lt_ge}, we have 
\begin{align}
    \Phib &= (\Xb \betasrc)\vb_{L-1}^T \\
    \Wb_1\Wb_2 \dots \Wb_{L-1} = \betasrc \vb_{L-1}^T
\end{align}
Combining these expressions we can solve for the linear function the network implements after transfer learning with ridge regression
\begin{align}
    \betahat &= \Wb_1\Wb_2\dots \Wb_{L-1}\hat{\Wb_L} \\
    &= \betasrc \vb_{L-1}^T (\normsq{\Xb\betasrc}{2} +  n\lambda\Ib_d)^{-1}\vb_{L-1}(\Xb \betasrc)^T\yb_{\textrm{t}} \\
    &= \left(\frac{(\Xb \betasrc)^T\yb_{\textrm{t}}}{\normsq{\Xb\betasrc}{2} +  n\lambda} \right)\betasrc
\end{align}
As in the proof of Theorem \ref{lt_ge}, we write $\betatar = \cos\theta \betasrc + \sin\theta \nub$ for some vector $\nub \perp \betasrc$, and introduce the independent $n-$dimensional Gaussian vectors $\zb = \Xb \betasrc \sim \mathcal{N}(0, \Ib_n)$ and $\wb = \Xb \nub \sim \mathcal{N}(0, \Ib_n)$. Then we can get the following expression for the generalization error of ridge linear transfer:
\begin{align}\label{integral_ge}
     \mathbb{E}_{\Xb, \epsilonb} \mathcal{R}_{lt}^\lambda &= \normsq{\betahat - \betatar}{2} \\
     &= 1 + (\cos^2\theta) I_1(n+2, \lambda) + (\sin^2\theta + \sigma^2) I_1(n, \lambda) - (2\cos^2\theta) I_2(n, \lambda)
\end{align}

where we have used spherical coordinates to define the following integrals
\begin{align}\label{I1}
    I_1(m, \lambda) &= \EE_z\left(\frac{\norm{z}{2}^{m-n+2}}{(\normsq{z}{2} + n\lambda)^2}\right) = \frac{S_{n-1}}{(2\pi)^{n/2}} \int_0^\infty \frac{r^{m+1} e^{-r^2/2}}{(r^2 + n\lambda)^2}dr \\ \label{I2}
    I_2(m, \lambda) &= \EE_z\left(\frac{\norm{z}{2}^{m-n+2}}{\normsq{z}{2} + n\lambda}\right) = \frac{S_{n-1}}{(2\pi)^{n/2}} \int_0^\infty \frac{r^{m+1} e^{-r^2/2}}{r^2 + n\lambda}dr 
\end{align}
We evaluate  $I_1(n, \lambda),  I_1(n+2, \lambda)$ and $I_2(n, \lambda)$ for large $n$. To avoid cluttering the notation, we ignore the coefficient $\frac{S_{n-1}}{(2\pi)^{n/2}}$ while solving the integral and restore it at the end of the calculation. Then 
\begin{align}
    I_1(n, \lambda) & \propto  2^{n/2} \int \frac{u^{n/2}e^{-u}}{(2u + n\lambda)^2}du \\
    &= n(2n)^{n/2} \int \frac{t^{n/2} e^{-nt}}{(2nt + n\lambda)^2}dt \\
    &= n(2n)^{n/2} \int g(t)e^{nf(t)} dt \\
    &\approx n(2n)^{n/2} \sqrt{\frac{2 \pi}{n \lvert f''(t_0)\rvert}}g(t_0)e^{nf({t_0})}
\end{align}
We have introduced the change of variables $u = r^2/2$ in the first line, $t = u/n$ in the second line, and finally evaluated the integral for large $n$ using the saddle point method. In the last line, $t_0$ is a critical point of $f(t) = \frac{1}{2}\log{t} - t$ and $g(t) = (2nt + n\lambda)^{-2}$. Differentiating $f(t)$ and setting equal to zero we find $t_0 = 1/2$. So for large $n$, 
\begin{align}
    I_1(n, \lambda) \propto \frac{\sqrt{\pi n}n^{n/2}e^{-n/2}}{(n+ n\lambda)^2}
\end{align}
We can now restore the angular coefficient to the integral
\begin{align}
    I_1(n, \lambda) &= \frac{S_{n-1}}{(2\pi)^{n/2}} \frac{\sqrt{\pi n}n^{n/2}e^{-n/2}}{(n+ n\lambda)^2} \\
    & \approx \frac{n\pi^{n/2}}{\sqrt{\pi n}}\left(\frac{n}{2} \right)^{-n/2}e^{n/2}\frac{\sqrt{\pi n}n^{n/2}e^{-n/2}}{(n+ n\lambda)^2} \\
    &= \frac{1}{n(1+ \lambda)^2}
\end{align}
where we have used Stirling's approximation in the second line. Therefore, $\lim_{n\to \infty} I_1(n, \lambda) = 0$. We stress that although the integral was approximated at the saddle point, the limit $n \to \infty$ is exact since corrections to the saddle point value are subleading in $n$. Similar calculations yield
\begin{align}
    I_1(n+2, \lambda) &= \frac{1}{(1+ \lambda)^2} \\
    I_2(n,\lambda)  &= \frac{1}{1 + \lambda}
\end{align}
for large $n$. Plugging this into (\ref{integral_ge}), we have 
\begin{equation}
   \lim_{n\to \infty} \mathbb{E}_{\Xb, \epsilonb} \mathcal{R}_{lt}^\lambda = 1 - \frac{(1+ 2\lambda)}{(1+\lambda)^2}\cos^2\theta
\end{equation}
This is a strictly increasing function in $\lambda \ge 0$ for any $\theta \in [0, \pi/2]$, which implies that the optimal regularization value is $\lambda^* = 0$. 

\subsection{Proof of Theorem \ref{ft_ge}}\label{sec:fine_tuning_proof}
The proof involves slightly tweaking the proof of Theorem \ref{linear_emp_func}. Since the source trained model obeyed the initialization assumption (\ref{init}), the invariant matrix (\ref{invar}) is equal to its value at initialization before pretraining throughout fine tuning as well. This implies that the first half of the proof of Theorem (\ref{linear_emp_func}) holds in the fine tuning case and the model will converge to a global minimizer of the training loss. The invariance throughout fine tuning also implies that (\ref{u1_lim}) holds and that $\Wb_1^\perp$ does not change during fine tuning, and remains fixed at its initial value from pretraining. Therefore, by the proof of Theorem \ref{lt_ge}, at the beginning of fine tuning, $\ub_1 = \betasrc$ and $(\Ib - \Pb_{\row(\Xb)})\betasrc$ is the component of $\ub_1$ that does not evolve. Meanwhile, $\Pb_\row(\Xb) \ub_1$ will evolve to the minimum norm solution. Combining these results, after fine tuning, 
    \begin{equation}
        \lim_{\alpha \to 0}\lim_{t\to\infty}\bm{\beta}_{ft}(t) = \bm{\beta}_{sc} + (\Ib - \Pb_{\row(\Xb)})\betasrc
    \end{equation}
    where $\bm{\beta}_{sc}$ is the minimum norm solution. We can now write the expected generalization error
    \begin{align*}
        \EE_{\Xb, \epsilonb} \mathcal{R}_{ft} &= \EE_{\Xb, \epsilonb} [\normsq{\betatar - \betaft}{2}] \\
        &= \EE_{\Xb, \epsilonb} \mathcal{R}_{sc} + \EE_{\Xb} \normsq{(\Ib - \Pb_{\row(\Xb)})\betasrc}{2} -2 \EE_{\Xb} \langle\betatar, (\Ib - \Pb_{\row(\Xb)})\betasrc\rangle \\
        &= \EE_{\Xb, \epsilonb} \mathcal{R}_{sc} + \max({0, 1-\gamma}) -2 \EE_{\Xb} \langle\betatar, (\Ib - \Pb_{\row(\Xb)})\betasrc\rangle \\
        &= \EE_{\Xb, \epsilonb} \mathcal{R}_{sc} + \max({0, 1-\gamma}) -2 \cos\theta \EE_{\Xb} \langle\betasrc, (\Ib - \Pb_{\row(\Xb)})\betasrc\rangle \\
        &- 2\sin\theta\EE_{\Xb} \langle\nub, (\Ib - \Pb_{\row(\Xb)})\betasrc\rangle \\
        &= \EE_{\Xb, \epsilonb} \mathcal{R}_{sc} + \max({0, 1-\gamma})(1-2\cos\theta) - 2\EE_{\Xb} \sin\theta \langle\nub, (\Ib - \Pb_{\row(\Xb)})\betasrc\rangle 
    \end{align*}
    where we have used the fact that $\Pb_{\row(\Xb)})\betasrc$ is a random projection as in the proof of Theorem \ref{scratch_ge} and set $\betatar = \cos\theta \betasrc + \sin\theta \nub$ for some $\nub \perp \betasrc$. The final term is equal to zero for the following reason. The operator $\Ib - \Pb_{\row(\Xb)}$ is a random projector onto the $d-n$ dimensional subspace orthogonal to $\row{X}$ Since the uniform distribution of random subspaces is rotationally invariant, we can instead fix a particular subspace and average over $\betasrc \sim \mathrm{Uniform}(S^{d-1})$. Using rotation invariance again, we can fix the projection to be along the first $d-n$ coordinates of $\betasrc$. Then we have 
    \begin{align}
        \EE\langle\nub, (\Ib - \Pb_{\row(\Xb)})\betasrc\rangle &= \sum_{k=1}^{n-d}\nub_k\EE(\betasrc)_k  \\
        &= 0
    \end{align}
    This completes the proof

\subsection{Proof of Theorem \ref{ft_ge_empsrc}}\label{sec:ft_empsrc_proof}
To begin note that by Theorem \ref{scratch_ge}, the network will implement the least squares solution to the source task when it is trained with gradient flow on a finite size data set. Denote this vector $\betahat_{\mathrm{bm}} = \Xb_\mathrm{s}^+ \yb_\mathrm{s}$. By the same reasoning as in Appendix \ref{sec:fine_tuning_proof}, after fine tuning
\begin{equation}
    \lim_{\alpha \to 0}\lim_{t \to \infty} \betaft(t) = \boldsymbol{\beta}_\mathrm{sc} + (\Ib - \Pb_{\mathrm{t}})\betahat_\mathrm{s}
\end{equation}
where $\Pb_{\mathrm{t}}$ is the orthogonal projector onto the space spanned by the rows of $\Xb_\mathrm{t}$. With this expression we can write down the expected generalization error of the fine-tuned model

\begin{align}
    \EE_{\Xb_s, \Xb_t, \epsilonb_s, \epsilonb_t}\mathcal{R}_{ft} &= \EE_{\Xb_s, \Xb_t, \epsilonb_s, \epsilonb_t}\normsq{\betatar - \boldsymbol{\beta}_{\mathrm{sc}} - (\Ib - \Pb_{\mathrm{t}})\betahat_\mathrm{s}}{2} \\
    &= \EE_{\Xb_t, \epsilonb_t}\normsq{\betatar - \betasc}{2} + \EE_{\Xb_s, \Xb_t, \epsilonb_s}\normsq{(\Ib - \Pb_{\mathrm{t}}) \betahat_\mathrm{s}}{2} - 2\EE_{\Xb_s, \Xb_t, \epsilonb_s}\betatar^T(\Ib - \Pb_{\mathrm{t}}) \betahat_\mathrm{s} \\
    &= \EE_{\mathcal{D}_t}\mathcal{R}_{\mathrm{sc}} + \max{(0, 1-\gamma_\mathrm{t})} \EE_{\Xb_s \epsilonb_s}\normsq{\betahat_\mathrm{s}}{2} - 2\EE_{\Xb_s, \Xb_t, \epsilonb_s}\betatar^T(\Ib - \Pb_{\mathrm{t}}) \betahat_\mathrm{s} \label{long_ft_empsrc}
\end{align}

where we have used the result on random projections employed in the proof of Theorem \ref{scratch_ge} along with the independence of the matrices $\Xb_s$ and $\Xb_t$. Let $\Pb_s$ denote the orthogonal projector onto the rowspace of $\Xb_\mathrm{s}$. Then we can write 

\begin{align}
    \EE_{\Xb_{\mathrm{s}}, \epsilonb_\mathrm{s}} \normsq{\betahat_\mathrm{s}}{2} &=  \EE_{\Xb_{\mathrm{s}}, \epsilonb_\mathrm{s}} \normsq{\Pb_\mathrm{s} \betasrc + \Xb_\mathrm{s}^+ \epsilonb_\mathrm{s}}{2} \\
    &= \EE_{\Xb_{\mathrm{s}}, \epsilonb_\mathrm{s}} \normsq{\Pb_\mathrm{s}\betasrc}{2} + \sigma_s^2 \EE_{\Xb_\mathrm{s}}\mathrm{tr}(\Xb_\mathrm{s}^{+ T}\Xb_\mathrm{s}^{+}) \\
    &= \begin{cases}
        \gamma_s + \frac{\sigma_s^2\gamma_s}{1-\gamma_s} & \gamma_s < 1 \\
        1+ \frac{\sigma_s^2}{\gamma_s - 1} & \gamma_s > 1
    \end{cases}
\end{align}
where the final line follows from the same reasoning as in the proof of Theorem \ref{scratch_ge} (Appendix \ref{sec:scratch_ge_proof}). The final term in \ref{long_ft_empsrc} can be computed using similar techniques to those in Appendix \ref{sec:fine_tuning_proof}. Let $\betatar = a \nub + b \betahat_\mathrm{s}$ for some $\nub\perp \betahat_\mathrm{s}$ in the subspace spanned by $\betatar$ and $\betahat_\mathrm{s}$. Then 
\begin{align}
    \EE_{\Xb_s, \Xb_t, \epsilonb_s}\betatar^T(\Ib - \Pb_{\mathrm{t}}) \betahat_\mathrm{s} &= \EE_{\Xb_\mathrm{s}, \epsilonb_\mathrm{2}} a \EE_{\Xb_\mathrm{t}} \nub^T (\Ib - \Pb_\mathrm{t}) \betahat_\mathrm{s} + \EE_{\Xb_\mathrm{s}, \epsilonb_\mathrm{2}} b \EE_{\Xb_\mathrm{t}}  \betahat_\mathrm{s}^T (\Ib - \Pb_\mathrm{t}) \betahat_\mathrm{s} \\
    &= \EE_{\Xb_\mathrm{s}, \epsilonb_\mathrm{2}} b \EE_{\Xb_\mathrm{t}}  \betahat_\mathrm{s}^T (\Ib - \Pb_\mathrm{t}) \betahat_\mathrm{s} \\
    &= \EE_{\Xb_\mathrm{s}, \epsilonb_\mathrm{2}} b \EE_{\Xb_\mathrm{t}}  \normsq{(\Ib - \Pb_\mathrm{t}) \betahat_\mathrm{s} }{2} \\
    &= \max(0, 1-\gamma_t) \EE_{\Xb_\mathrm{s}, \epsilonb_\mathrm{2}} b \normsq{\betahat_\mathrm{s}}{2} \\
    &= \max(0, 1-\gamma_t) \EE_{\Xb_s}\betatar \Pb_\mathrm{s}\betasrc \\
    &= \cos\theta \max(0, 1-\gamma_t) \min(1, \gamma_s)
\end{align}
In the second line we have used the same reasoning as at the end of Appendix \ref{sec:fine_tuning_proof}, in the third line we have used the properties of orthogonal projectors, and in the remaining lines we have used properties of random projections that are discussed in detail in Appendix \ref{sec:scratch_ge_proof}. Putting the results above together completes the proof. 

\section{ReLU networks}\label{sec:relu_appendix}
In this section, we describe how to compute projections into (and out of) the RKHS defined by a one hidden layer ReLU network. Consider a network $f(\xb)$ and a target function $f_*(\xb)$. 
\begin{align}
    f(\xb) &= \frac{1}{m} \sum_{i=1}^m c_i \sigma(\wb_i^T \xb) \\
     f_*(\xb) &= \frac{1}{m_*} \sum_{i=1}^{m_*} c^*_i \sigma(\wb_i^{*T} \xb)
\end{align}
The feature space of the model is $\textrm{span}\{ \sigma(\wb_i^T \xb)\}_{i \le m}$ in $L_2(p)$. To form projectors into this space and its orthogonal complement, we introduce the Mercer decomposition. 
For any positive definite, symmetric kernel $k:\mathcal{X}\times\mathcal{X} \to \RR$ we can define features through partial evaluation of the kernel, i.e., $\mathcal{\phi}(\xb) = k(\cdot, \xb)$. 
This kernel also induces a reproducing kernel Hilbert space (RKHS) via the Moore–Aronszajn theorem, which is defined as the set of all functions that are linear combinations of these features, 
\begin{equation}
    \mathcal{H}_k = \left\{f \Big| f = \sum_{i=1}^M \alpha_i k(\cdot, \zb_i) \text{ for some $M \in \mathbb{N}$, $\alpha_i \in \mathbb{R}$, $\zb_i \in \mathcal{X}$}\right\}
\end{equation}
The associated norm of a function $f \in \mathcal{H}_k$ is given by 
\begin{equation}\label{kc_norm}
    ||f||_k^2 = \sum_{ij}^M \alpha_i k(\zb_i, \zb_j) \alpha_j
\end{equation}

We can also define the operator $T_k: L_2(p) \to L_2(p)$ with action

\begin{equation}
    T_k f = \int d\xb' p(\xb') k(\xb,\xb') f(\xb')
\end{equation}

The spectral decomposition of this operator, $\{\lambda^2_l, \psi_l\}_{l=1}^\infty$ is known as the Mercer decomposition and the eigenfunctions form a basis for $L_2(p)$. The eigenfunctions $\psi_l(\xb)$ satisfy
\begin{equation}\label{eigval}
    T_k \psi_l = \lambda_l \psi_l
\end{equation}
where $\lambda_l$ is the associated eigenvalue. The eigenfunctions with non-zero eigenvalue form a basis for the RKHS $\mathcal{H}_k$. Given a function $f = \sum_{l=1}^\infty c_l \psi_l$ one can show by direct computation that 
\begin{equation}
    ||f||_k^2 = \sum_{l=1}^\infty c_l^2/\lambda_l^2
\end{equation}

which also demonstrates that functions with support on eigenmodes with zero eigenvalue are not in the RKHS. If we can construct the Mercer eigenfunctions we can build orthogonal projection operators into the RKHS and its orthogonal complement. To begin note that for Gaussian data, $p(\xb) = \mathcal{N}(0, \Ib_d)$, we can exactly compute the expected overlap between two ReLU functions in terms of their weight vectors \citep{cho_kernel_2009}:
\begin{align}
    \langle \sigma(\wb_i^T \xb) \sigma(\wb_j^T \xb) \rangle_{L_2} &= \int p(\xb) \sigma(\wb_i^T \xb) \sigma(\wb_j^T \xb) \\
    &= \frac{1}{2 \pi}\left (\sqrt{1 - u_{ij}^2} + u(\pi - \arccos{u_{ij}}) \right)
\end{align}

where $u_{ij} = \frac{\wb_i^T \wb_j}{\norm{\wb_i}{2}\norm{\wb_j}{2}}$ With this in hand, we can define the following matrices:
\begin{align}
 \Kb_{ij} &= \frac{1}{m}\langle \sigma(\wb_i^T \xb) \sigma(\wb_j^T \xb) \rangle_{L_2} \\
\Kb^{*}_{ij} &= \frac{1}{m_*}\langle \sigma(\wb_i^{*T} \xb) \sigma(\wb_j^{*T} \xb) \rangle_{L_2} \\
\Tilde{\Kb}_{ij} &= \frac{1}{\sqrt{mm_*}}\langle \sigma(\wb_i^T \xb) \sigma(\wb_j^{*T} \xb)\rangle_{L_2} 
\end{align}
The Mercer eigenfunctions can be constructed by diagonalizing the matrix $K$. If $\zb_l$ is an eigenvector of $K$ with eigenvalue $\lambda_l^2$, then 

\begin{equation}\label{mercer_funcs}
    \psi_l(\xb) = \frac{1}{\sqrt{m \lambda_l^2}} \sum_{l=1}^{m} (z_l)_i \sigma(\wb_i^T \xb)
\end{equation}

is a Mercer eigenfuction with eigenvalue $\lambda_l^2$, which can be verified by plugging the expression into the eigenvalue equation (\ref{eigval}). Since the feature space is $m$-dimensional, we know that these $m$ eigenfunctions span the RKHS. We can now write down expressions for the projections of $f_*(\xb)$ into this space and its orthogonal complement
\begin{align}
    \normsq{P_\parallel f_*(\xb)}{L_2} &= \frac{1}{m_*} \cb_*^T \Tilde{\Kb}^T \Kb^{-1} \Tilde{\Kb} \cb_* \\
    \normsq{P_\perp f_*(\xb)}{L_2} &= \normsq{f_*}{L_2} - \normsq{P_\parallel f_*(\xb)}{L_2} = \frac{1}{m_*} \cb_*^T \Kb_* \cb_* -  \frac{1}{m_*} \cb_*^T \Tilde{\Kb}^T \Kb^{-1} \Tilde{\Kb} \cb_* 
\end{align}

\section{Experimental details}
\subsection{Deep linear models}
For the experiments in deep linear models, we train a two layer linear network with dimension $d = 500$. We initialize the weight matrices with random normal weights and scale parameter $\alpha = 10^{-5}$. To approximate gradient flow, we use full batch gradient descent with small learning rate $\eta = 10^{-3}$. We train each model for $10^5$ steps or until the training loss reaches $10^{-6}$. We perform target training for 20 instances of the training data and a grid of dataset sizes and values of $\theta$

\subsection{ReLU networks}
For the experiments in shallow ReLU networks, we use the parameters $d = 100$, $m = 1000$, $m_* = 100$. We initialize the weight matrices randomly on the sphere and the output weights are initialized at $10^{-7}$. We approximate gradient flow with full batch gradient descent and learning rate $0.01 m$ and train for $10^5$ iterations or until the loss reaches $10^{-6}$. For training with a finite dataset we use $100$ realizations of the training data, and average over $10$ random initialization seeds. 

\section{Additional Figures}\label{sec:plots}

\begin{figure}[h]
    \centering
    \includegraphics[width=0.85\linewidth]{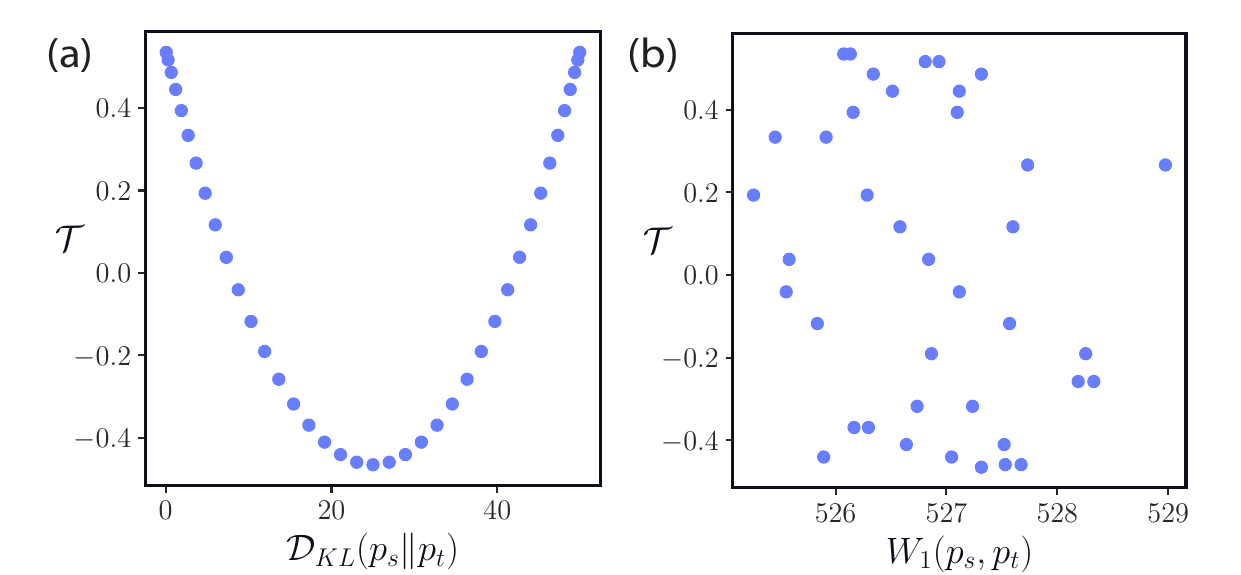}
    \caption{\textbf{Transferability is not predicted by $\phi$-divergences or integral probability metrics} We generate source and target distributions $\psrc$, $\ptar$ according to the setup in \Secref{sec:deep_lin} and plot the transferability $\mathcal{T}$ (\ref{transferability}) as a function of \textbf{(a)} the KL divergence $\KL (\psrc \Vert \ptar)$ and \textbf{(b)} the Wasserstein $1$-metric. The KL divergence can be computed exactly in this setting (see \Secref{sec:dudley_proof}). $W_1$ is computed from finite samples using the algorithm in \citet{ipms}.}
    \label{fig:metrics}
\end{figure}

\begin{figure}[h]
    \centering
    \includegraphics[width=0.85\linewidth]{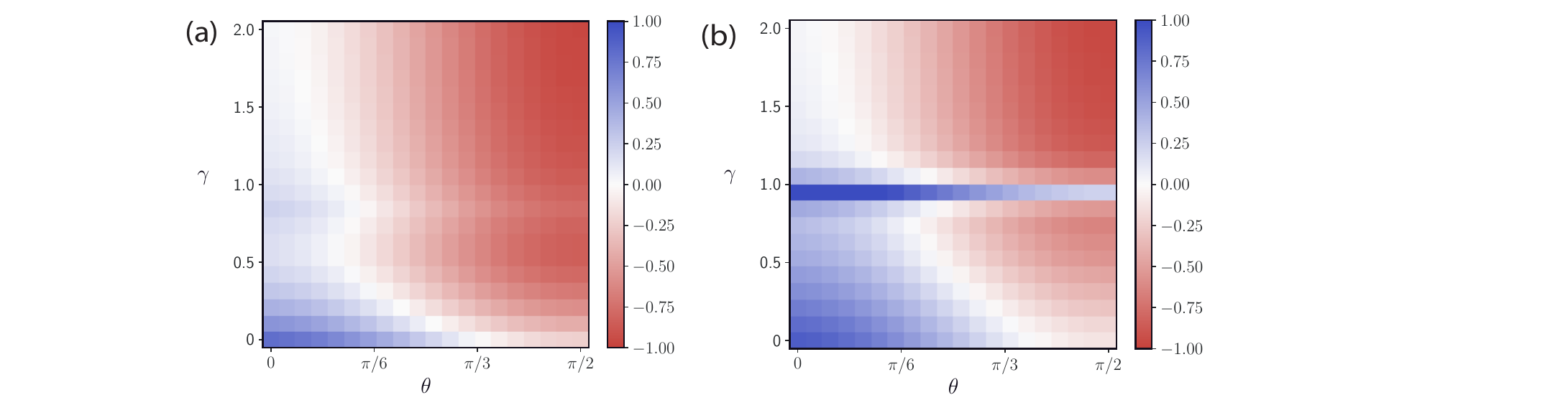}
    \caption{\textbf{Regularizing scratch training eliminates anomalous positive transfer}. Simulated linear transfer phase diagram for $L=2$, $\sigma=0.2$, $d=500$ \textbf{(a)} with optimal weight decay in the scratch training and \textbf{(b)} without. To tune the weight decay hyperparameter, we sweep over a grid of $\lambda_{\textrm{wd}} \in [0,10^{-4},10^{-3}, 10^{-2}, 10^{-1}]$ and choose the model that has the lowest generalization error. The transfer learning procedure is identical to Fig. \ref{fig:linear_transfer}, only scratch training is altered. In the regularized plot \textbf{(a)}, the spike of positive transfer along $\gamma = 1$ is eliminated, as the regularized scratch trained model does not undergo double descent. }
    \label{fig:scratch_reg_phase}
\end{figure}

\begin{figure}[h]
    \centering
    \includegraphics[width=0.5\linewidth]{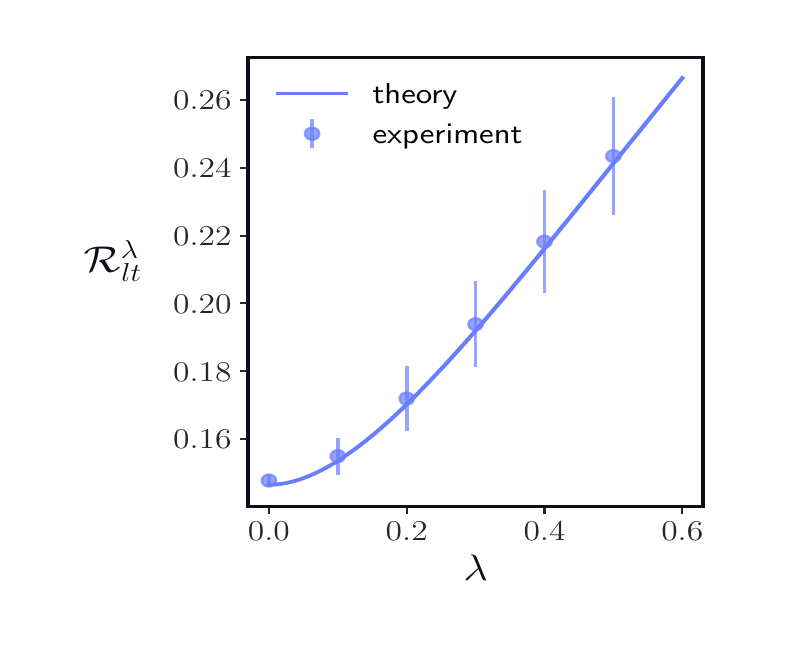}
    \caption{\textbf{Ridge regularization leads to worse generalization in linear transfer}. Linear transfer generalization error for $\gamma = 0.5$ as a function of regularization parameter $\lambda$. The generalization error is a strictly increasing function of $\lambda$, which implies that the optimal regularizer is $\lambda_* = 0$. Solid line is theory (\ref{ridge_ge}), points are experiments. Error bars represent the standard deviation over $20$ realizations of the target dataset. }
    \label{fig:ridge_transfer}
\end{figure}

\begin{figure}[h]
    \centering
    \includegraphics[width=1\linewidth]{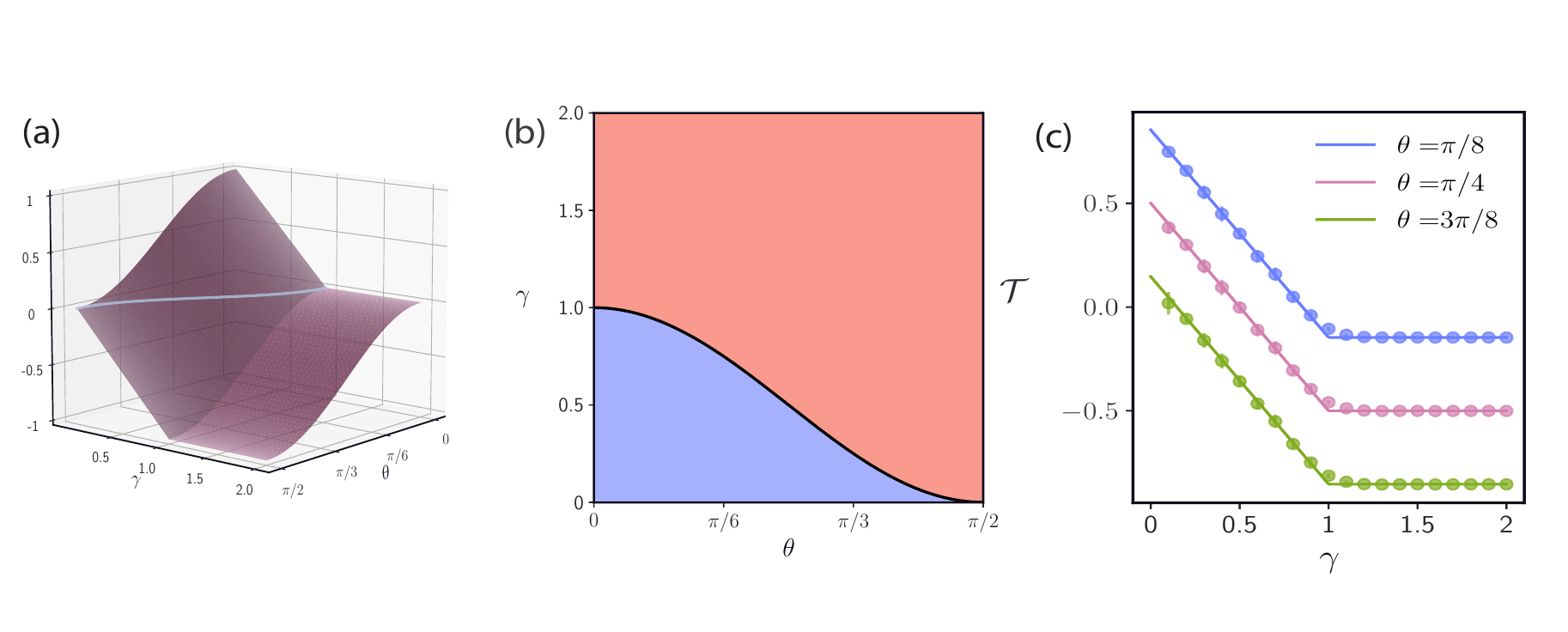}
    \caption{\textbf{Linear transferability, $\bm{\sigma = 0}$} We pretrain a linear network (\ref{deep_linear_model}) with $L=2$ and $d=500$ to produce un-noised labels from linear source function $y = \betasrc^T \xb$ using the population loss (\ref{population_loss}). We then retrain the final layer weights on a sample of $n = \gamma d$ points $(\xb_i, y_i = \betatar^T \xb_i)$ where $\betasrc^T\betatar = \cos\theta$ and compare its generalization error to that of a model trained from scratch on the target dataset. (\textbf{a}) Theoretical transferability surface (\ref{transferability}) as a function of the number of data points $\gamma = n/d$ and task overlap $\theta$. (\textbf{b}) Top-down view of (a), shaded by sign of transferability. Red indicates negative transferability $\mathcal{T} < 0$ and blue indicates positive transferability $\mathcal{T} > 0$. Note that transfer is always negative when $\gamma > 1$, since the scratch trained model can perfectly learn the target task as there is no label noise. (\textbf{c}) Slices of (a) for constant $\theta$. Solid lines are theory, dots are from numerical experiments. Error bars represent the standard deviation over $20$ draws of the training data.}
    \label{fig:noiseless}
\end{figure}

\begin{figure}[h]
    \centering
    \includegraphics[width=0.75\linewidth]{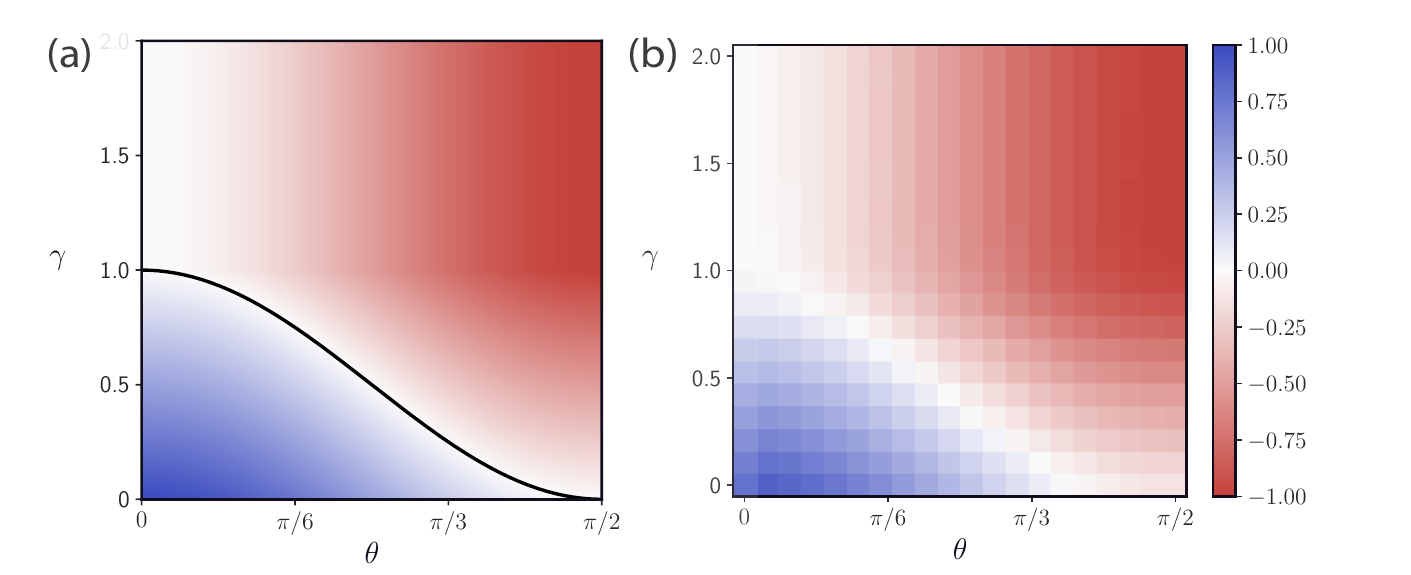}
    \caption{\textbf{Linear transfer $\sigma = 0$: theory vs. experiment} \textbf{(a)} Identical to Fig. \ref{fig:noiseless}(b), but shaded according to the value of the transferability. \textbf{(b)} Results of numerical simulations with $L=2$, $d = 500$}
    \label{fig:noiseless_shaded}
\end{figure}

\begin{figure}[h]
    \centering
    \includegraphics[width=0.75\linewidth]{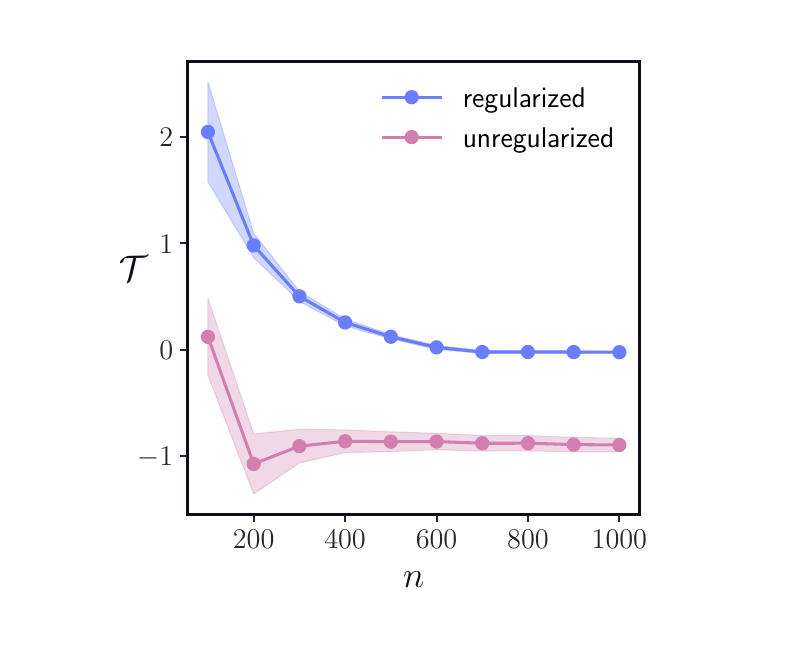}
    \caption{\textbf{Regularizing pretrained models toward the lazy regime eliminates negative transfer}: We train a two layer ReLU network on the transfer learning task defined by (\ref{relu_src}) and (\ref{relu_tar}) with $\mu = 0.9$, $m=1000$, $m_*= 100$, $d=100$. During pretraining, we include a regularization term $\lambda \sum_{i = 1}^{m} \normsq{\wb_i - \wb_i^{(0)}}{2}$ where $\wb_i^{(0)}$ is the random initial value of weight vector $\wb_i$. This regularization prevents the weights of the network from straying far from their intital values. When $\lambda \to \infty$, features are not updated and model operates in a lazy regime. We generate a sweep of pretrained models for $\lambda \in [0,10^{-4},10^{-3}, 10^{-2}, 10^{-1}]$. We then linearly transfer each of these pretrained models to the target task and choose the model with the best generalization error (blue). The transferability degrades with target set size as expected, but the optimally regularized pretrained model avoids negative transfer, while the fully rich model (pink) transfers poorly for nearly all dataset sizes considered.}
    \label{fig:source_reg}
\end{figure}

\end{document}